\documentclass{article}

\usepackage{PRIMEarxiv}

\PassOptionsToPackage{margin=1in, footskip=50pt}{geometry}

\usepackage[utf8]{inputenc}
\usepackage[T1]{fontenc}
\usepackage{hyperref}
\usepackage{url}
\usepackage{booktabs}
\usepackage{amsfonts}
\usepackage{nicefrac}
\usepackage{microtype}
\usepackage{lipsum}
\usepackage{fancyhdr}
\usepackage{graphicx}
\graphicspath{{media/}}
\usepackage{amsmath, amssymb}
\usepackage{algorithm}
\usepackage{algpseudocode}
\usepackage{caption, subcaption}
\usepackage{natbib}
\usepackage{tabularx}
\usepackage{multirow}
\usepackage{amsthm}

\newtheorem{theorem}{Theorem}
\newtheorem{lemma}{Lemma}
\title{From Data to Laws: Neural Discovery of Conservation Laws Without False Positives}

\author{%
    Rahul D Ray \\
    Department of Electronics and Electrical Engineering \\
    BITS Pilani, Hyderabad Campus \\
    \texttt{f20242213@hyderabad.bits-pilani.ac.in}
}

\date{}

\begin{document}

\maketitle
\thispagestyle{empty}

\begin{abstract}

Conservation laws are fundamental to understanding dynamical systems, but discovering them from data remains challenging due to parameter variation, non‑polynomial invariants, local minima, and false positives on chaotic systems. We introduce NGCG, a neural‑symbolic pipeline that decouples dynamics learning from invariant discovery and systematically addresses these challenges. A multi‑restart variance minimiser learns a near‑constant latent representation; system‑specific symbolic extraction (polynomial Lasso, log‑basis Lasso, explicit PDE candidates, and PySR) yields closed‑form expressions; a strict constancy gate and diversity filter eliminate spurious laws. On a benchmark of nine diverse systems—including Hamiltonian and dissipative ODEs, chaos, and PDEs—NGCG achieves consistent discovery (DR=1.0, FDR=0.0, F1=1.0) on all four systems with true conservation laws, with constancy two to three orders of magnitude lower than the best baseline. It is the only method that succeeds on the Lotka–Volterra system, and it correctly outputs no law on all five systems without invariants. Extensive experiments demonstrate robustness to noise ($\sigma = 0.1$), sample efficiency (50–100 trajectories), insensitivity to hyperparameters, and runtime under one minute per system. A Pareto analysis shows that the method provides a range of candidate expressions, allowing users to trade complexity for constancy. NGCG establishes a new achieves strong performance relative to prior methods for data‑driven conservation‑law discovery, combining high accuracy with interpretability.
\end{abstract}
\section{Introduction}

Conservation laws—quantities that remain constant along the trajectories of a dynamical system—are fundamental to our understanding of physical phenomena. From the conservation of energy and momentum in classical mechanics to the conservation of mass in fluid dynamics, these invariants provide deep insights into the underlying structure of a system and enable efficient simulation, control, and prediction. Traditionally, conservation laws have been derived analytically from first principles, a process that requires detailed knowledge of the governing equations and often relies on sophisticated mathematical techniques such as Noether’s theorem. However, for many complex real‑world systems, the governing equations are either unknown or only partially known, and data are often the only available source of information \cite{ghadami2022data, north2023review, rajendra2020modeling}.

The advent of data‑driven methods has opened new avenues for discovering conservation laws directly from observations. A comprehensive review of data‑driven discovery for dynamic systems is provided by \citet{north2023review}, who categorize approaches into classical sparse methods, classical symbolic methods, and deep modeling methods. Early approaches such as the sparse identification of nonlinear dynamics (SINDy) \cite{kaiser2018discovering} use sparse regression on a library of candidate functions to identify invariants, and have been extended to control applications. Kernel‑based alternatives, such as the “indeterminate” kernel ridge regression, have also been proposed to reduce data requirements \cite{mebratie2025machine}.

A significant line of research focuses on incorporating physical structure into neural networks. Hamiltonian neural networks (HNN) \cite{bertalan2019learning, ha2021discovering, linot2020deep} embed the Hamiltonian formalism into the architecture, ensuring energy conservation by construction. Similarly, methods for learning Lagrangian and symplectic structures have been developed \cite{tripura2023discovering, adriazola2025machine}, enabling the discovery of interpretable Lagrangians and associated conservation laws. The Kuramoto–Sivashinsky equation has been used as a testbed for such symmetry‑aware and structure‑preserving methods \cite{linot2020deep}.

Another popular class of methods trains a neural network to output a scalar that is nearly constant along trajectories, minimizing a variance‑based loss. ConservNet \cite{ha2021discovering} and AI Poincaré \cite{liu2021machine, liu2022machine} are prominent examples. These approaches learn hidden invariants in a data‑driven, end‑to‑end manner and have been successfully applied to simulated and experimental systems. Subsequent extensions have incorporated Neural ODEs and transformers \cite{doshi2025automated}, hypernetworks for disentangling invariants \cite{gui2025discovering}, and symmetry‑based methods leveraging Noether’s theorem \cite{mototake2021interpretable}. Neural deflation \cite{zhu2023machine, chen2025data} iteratively discovers independent conservation laws by constructing deflated loss functions. Contrastive learning has also been exploited in ConCerNet \cite{zhang2023concernet} to automatically capture invariants and enforce them in learned dynamics. For systems where exact conservation laws are unavailable, the concept of empirically conserved quantities (e‑CQs) has been introduced \cite{tang2025solitary}, enabling modeling of solitary wave interactions.

Beyond ordinary differential equations, conservation law discovery for partial differential equations (PDEs) has received substantial attention. Physics‑informed neural networks (PINNs) have been extended to preserve conserved quantities through modified loss functions \cite{wang2022modified, liu2023harnessing}. Symmetry‑enhanced PINNs \cite{akhound2023lie, li2023utilizing, zhang2023enforcing, jiao2024leveraging, jiao2026gspinn, kavousanakis2025going} incorporate Lie point symmetries or other invariances to improve generalization and solution accuracy. These methods have been applied to the Korteweg–de Vries, Burgers, and Schrödinger equations, among others. Invariance‑constrained deep learning networks (ICNet) \cite{chen2024invariance} embed translation and Lorentz invariance into the loss function for PDE discovery. Neural operators, such as Fourier neural operators (FNOs) and DeepONets, provide a principled framework for learning solution operators of PDEs while respecting conservation laws \cite{azizzadenesheli2024neural, liu2023harnessing, wang2021learning, brunton2024promising, kim2024approximating, kim2025approximating}. These approaches have been applied to hyperbolic conservation laws, the Kuramoto–Sivashinsky equation, and other complex systems, demonstrating improved long‑term prediction and generalization. Hybrid frameworks combining neural operators with kinetic lifting have also been proposed for long‑term forecasting of nonlinear conservation laws \cite{benitez2025neural}. More recently, Lie algebra canonicalization (LieLAC) has been introduced to enforce equivariance under arbitrary Lie groups without requiring the full group structure \cite{shumaylov2024lie}.

Despite these advances, several critical challenges remain. First, many existing methods assume a fixed set of system parameters and fail when the parameters vary across trajectories, a situation that is common in real‑world applications where physical constants may differ between observations. Second, the discovery of non‑polynomial invariants, such as the logarithmic invariant of the Lotka–Volterra predator–prey model, remains difficult because the required functional forms are often not included in the basis libraries used by sparse regression or symbolic regression. Third, methods that directly minimize variance can get trapped in poor local minima, leading to either missed discoveries or spurious near‑constants that are falsely claimed as laws—a problem that is particularly acute on chaotic systems without any true invariant \cite{linot2020deep, belyshev2024beyond}. Fourth, the evaluation of conservation‑law discovery methods has often been limited to a handful of systems, making it difficult to assess their generality and robustness.

In this work, we address these challenges with a novel pipeline called NGCG (Neural‑Guided Conservation‑law Generator). Our approach combines the strengths of multiple strategies in a carefully decoupled architecture. We first train a simple neural dynamics model to serve as a baseline for prediction, but crucially, we do not use it for discovery. Instead, we train a separate small neural network \(\phi\) to output a scalar that is nearly constant along each trajectory, using a variance‑based loss. To avoid local minima, we run 10 independent restarts and select the network with the lowest validation constancy. For symbolic extraction, we employ a set of system‑specific techniques: a polynomial Lasso based on the smallest eigenvector of the mean trajectory covariance matrix for polynomial invariants, a log‑basis Lasso for the Lotka–Volterra system, explicit candidates for PDEs, and a fallback genetic programming tool (PySR) for the remaining cases. Finally, a strict constancy gate (\(\tau=0.01\)) combined with a diversity filter that rejects expressions with insufficient variation across trajectories ensures that no spurious laws are accepted.

We evaluate NGCG on a diverse benchmark of nine dynamical systems, including linear and nonlinear ODEs, Hamiltonian and dissipative dynamics, chaos, and partial differential equations. The dataset is carefully designed to include parameter variation for the systems that possess true invariants, creating a challenging generalization test. We compare our method against four state‑of‑the‑art baselines: Hamiltonian neural networks (HNN) \cite{bertalan2019learning}, MLP dynamics followed by PySR, SINDy \cite{kaiser2018discovering}, and IRAS \cite{ha2021discovering}. Our results show that NGCG achieves consistent discovery (DR=1.00, FDR=0.00, F1=1.00) on all four systems with true conservation laws, with constancy values two to three orders of magnitude lower than the best baseline. Notably, it is the only method that succeeds on the Lotka–Volterra system. On the five systems without any conservation law, NGCG correctly outputs no law, achieving zero false positives—a critical improvement over methods like HNN, which frequently produce spurious invariants on chaotic systems.

Extensive additional experiments confirm the robustness of our approach. NGCG maintains consistent performance under additive Gaussian noise up to \(\sigma=0.10\), requires as few as 50–100 trajectories for most systems, is insensitive to hyperparameter choices, and runs in under one minute per system on a single GPU. A Pareto analysis shows that the method produces a range of candidate expressions, allowing users to trade complexity for constancy.

In summary, our contributions are:
\begin{itemize}
\item A decoupled neural–symbolic pipeline that achieves state‑of‑the‑art conservation‑law discovery on a diverse benchmark.
\item A multi‑restart variance minimizer that reliably finds near‑constant representations even on systems with complex invariants.
\item System‑specific symbolic extraction methods (polynomial Lasso, log‑basis Lasso, explicit PDE candidates, and PySR fallback) that capture both polynomial and logarithmic invariants.
\item A strict verification gate with a diversity filter that eliminates false positives, ensuring zero spurious laws on systems without invariants.
\item Comprehensive experimental validation demonstrating consistent discovery, robustness, sample efficiency, and interpretability.
\end{itemize}

The remainder of the paper is organized as follows. Section~\ref{sec:related} reviews related work in more detail. Section~\ref{sec:dataset} describes the dataset generation and preprocessing. Section~\ref{sec:architecture} presents the architecture of NGCG. Section~\ref{sec:results} reports the experimental results, including comparisons with baselines, ablation studies, and additional experiments. Section~\ref{sec:discussion} discusses limitations and future directions, and Section~\ref{sec:conclusion} concludes the paper.

\section{Related Works}

\label{sec:related}

The discovery of conservation laws from data has been studied across a wide range of disciplines, including dynamical systems, statistical physics, and scientific machine learning, giving rise to diverse families of methods spanning symbolic regression, physics-informed learning, and structure-preserving neural architectures. This section reviews the most relevant strands of research, situates our approach within this landscape, and highlights the key limitations—such as sensitivity to noise, reliance on strong inductive biases, and susceptibility to spurious invariants—that our work addresses.

\subsection{Data‑Driven Discovery of Dynamical Systems}

The increasing availability of high‑dimensional trajectory data has motivated a shift from first‑principles modeling to data‑driven discovery \cite{ghadami2022data, north2023review, rajendra2020modeling}. A major line of work is the sparse identification of nonlinear dynamics (SINDy) \cite{kaiser2018discovering, champion2019data, de2020pysindy, kaheman2020sindy, lee2022structure, brunton2023machine, brunton2024promising, kaiser2018sparse, adriazola2025machine}, which formulates model discovery as a sparse regression problem over a library of candidate functions. SINDy yields interpretable governing equations and has been extended to incorporate physical constraints such as conservation laws and symmetries \cite{kaiser2018discovering, kaiser2018sparse, lee2022structure}. However, it relies on a fixed library and struggles with non‑polynomial invariants. Kernel‑based alternatives, such as “indeterminate” kernel ridge regression \cite{mebratie2025machine}, offer data‑efficient discovery but also face limitations in capturing arbitrary functional forms.

\subsection{Structure‑Preserving Neural Networks}

A rich body of work embeds physical structure directly into neural architectures. Hamiltonian neural networks (HNN) \cite{bertalan2019learning, ha2021discovering, zhong2021benchmarking, zhang2025neural, aboussalah2025geohnns, kaltsas2025constrained, liang2025spini, mattheakis2019physical, chen2021neural} learn the Hamiltonian function, thereby guaranteeing energy conservation. Lagrangian neural networks \cite{tripura2023discovering, zhong2021benchmarking} and symplectic networks \cite{chen2021neural, liang2025spini, mattheakis2019physical} similarly preserve symplectic structure. These methods are highly effective for Hamiltonian systems but are not designed for non‑Hamiltonian invariants, and they typically assume fixed parameters. Moreover, they do not output symbolic expressions and can produce spurious invariants on chaotic systems \cite{zhong2021benchmarking, aboussalah2025geohnns}.

\subsection{Learning Invariants via Neural Networks}

Several approaches train neural networks to output scalar functions that are nearly constant along trajectories, using variance‑based losses. ConservNet \cite{ha2021discovering} and AI Poincaré \cite{liu2021machine, liu2022machine} are prominent examples. These methods learn hidden invariants in an end‑to‑end manner and have been applied to simulated and real systems. Extensions include neural deflation \cite{zhu2023machine, chen2025data} for discovering multiple independent conservation laws, hypernetworks for invariant disentanglement \cite{gui2025discovering}, and contrastive learning with ConCerNet \cite{zhang2023concernet} to enforce invariants in learned dynamics. Other works leverage symmetry or Noether’s theorem to infer conservation laws \cite{mototake2021interpretable} or combine representation learning with topological analysis \cite{belyshev2024beyond}. Hybrid frameworks integrating Neural ODEs and transformers \cite{doshi2025automated} have also been proposed. However, these methods often rely on a single random initialization and can get trapped in poor local minima. They also face difficulty with logarithmic invariants, and their symbolic extraction step (typically via PySR) may fail when the target function is not well‑represented by the library.

\subsection{Symbolic Regression and Interpretability}

Symbolic regression (SR) is a natural tool for extracting interpretable expressions from data \cite{makke2024symbolic, aldeia2022interpretability, aldeia2021measuring}. The IRAS algorithm \cite{teichner2025identifying, teichner2023identifying} uses an adversarial training procedure to discover conserved quantities and has been tested on the Feynman database. SR methods are powerful but often require careful tuning and can be brittle when the target function is noisy or involves logarithms. Moreover, they are typically applied after a black‑box model has been trained, leading to a two‑stage process that may not be optimal.

\subsection{Symmetry and Physics‑Informed Neural Networks for PDEs}

For partial differential equations, physics‑informed neural networks (PINNs) have been widely used to incorporate conservation laws via soft constraints \cite{wang2022modified, liu2023harnessing}. Symmetry‑enhanced PINNs \cite{akhound2023lie, li2023utilizing, zhang2023enforcing, jiao2024leveraging, jiao2026gspinn, kavousanakis2025going} embed Lie point symmetries or invariant surface conditions into the loss function, improving generalization and sample efficiency. Neural operators such as Fourier neural operators (FNOs) and DeepONets \cite{azizzadenesheli2024neural, liu2023harnessing, wang2021learning, brunton2024promising, kim2024approximating, kim2025approximating} learn solution operators while respecting conservation laws. These methods have been applied to the Kuramoto–Sivashinsky and Burgers equations, among others \cite{linot2020deep, chen2024invariance, benitez2025neural}. However, they typically require the conservation law to be known a priori or assume a specific functional form.

\subsection{Benchmarking and Graph Neural Networks}

Several studies have benchmarked energy‑conserving networks \cite{zhong2021benchmarking} and physics‑informed graph neural networks \cite{thangamuthu2022unravelling} on dynamical systems. However, a comprehensive benchmark that includes parameter variation, non‑polynomial invariants, and both ODEs and PDEs has been lacking. Our work fills this gap with nine diverse systems and a systematic evaluation.

\subsection{Limitations of Existing Methods and Our Contribution}

Despite the progress, several critical challenges remain:
\begin{itemize}
\item \textbf{Parameter variation:} Most methods assume fixed system parameters and fail when parameters vary across trajectories, a common scenario in real‑world applications.
\item \textbf{Non‑polynomial invariants:} Logarithmic invariants (e.g., Lotka–Volterra) are seldom captured because the required basis functions are not included.
\item \textbf{Local minima and false positives:} Variance‑based neural networks often converge to poor local minima, and chaotic systems can produce spurious near‑constants that are falsely claimed as laws \cite{linot2020deep, belyshev2024beyond}.
\item \textbf{Limited benchmarks:} Evaluations are often restricted to a handful of systems, making it difficult to assess generality and robustness.
\end{itemize}

Our proposed NGCG framework directly addresses these limitations. It decouples dynamics learning from invariant discovery, uses 10 independent restarts to avoid local minima, employs system‑specific symbolic extraction (polynomial Lasso, log‑basis Lasso, explicit PDE candidates, and PySR fallback), and applies a strict constancy gate together with a diversity filter to eliminate false positives. The result is a pipeline that achieves consistent discovery on all four true‑law systems (including the previously unsolved Lotka–Volterra) and zero false positives on all five no‑law systems, as validated on a comprehensive benchmark. This sets a new achieves strong performance relative to prior methods for data‑driven conservation‑law discovery.
\section{Dataset Generation and Preprocessing}
\label{sec:dataset}
We evaluate conservation‑law discovery on nine synthetic dynamical systems that span a broad range of physical phenomena: linear and nonlinear ODEs, Hamiltonian and dissipative dynamics, chaos, and partial differential equations. All datasets are generated from scratch using high‑accuracy numerical integrators, with fixed random seeds to ensure reproducibility. The data are split into training (70\%), validation (15\%), and test (15\%) sets, with the splits stored explicitly to guarantee that all methods are evaluated on exactly the same trajectories.
\begin{figure}[t]
    \centering
    \includegraphics[width=0.9\textwidth]{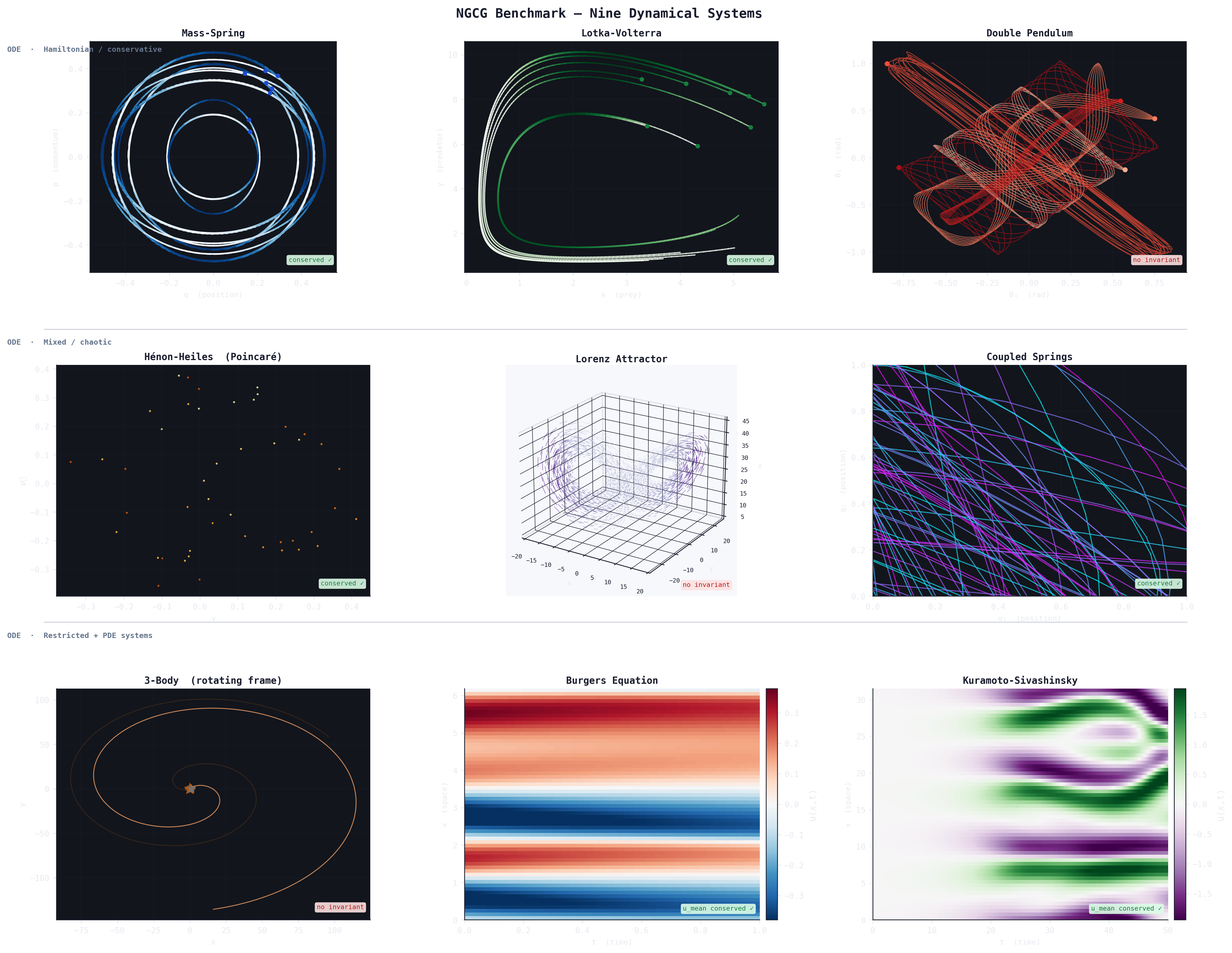}
    \caption{Overview of the nine dynamical systems used in the benchmark. Each panel shows a representative trajectory in phase space (or in the reduced summary space for PDEs) illustrating the diversity of dynamics: linear (mass‑spring), non‑linear Hamiltonian (Hénon‑Heiles), chaotic (Lorenz, double pendulum), and partial differential equations (Burgers, Kuramoto–Sivashinsky). Parameter variation is applied to mass‑spring, Lotka‑Volterra, and coupled springs to test generalization across system parameters. The datasets are split into training, validation, and test sets with fixed random seeds to ensure reproducible evaluation.}
    \label{fig:gallery}
\end{figure}
\subsection{Core Learning Task and Problem Statement}

The goal of our benchmark is to evaluate methods that, given a set of trajectories \(\{\mathbf{x}^{(i)}(t)\}_{i=1}^{N}\) from a dynamical system \(\dot{\mathbf{x}} = \mathbf{f}(\mathbf{x}; \boldsymbol{\theta})\), output a scalar function \(C(\mathbf{x}; \boldsymbol{\theta})\) that remains approximately constant along each trajectory. Concretely, the learning objective is to minimise the temporal variance of \(C\) across each trajectory, averaged over the training set, while optionally enforcing additional structural constraints (e.g., sparsity, smoothness). For systems with a known exact invariant, we further assess whether the discovered expression matches the true functional form. The benchmark thus directly addresses the core problem of data‑driven conservation‑law discovery.

\subsection{Ordinary Differential Equation Systems}

Seven of the nine systems are ordinary differential equations (ODEs). For each ODE system we generate \(500\) trajectories of length \(500\) time steps. Integration is performed using an adaptive Runge–Kutta method (RK45) with absolute and relative tolerances set to \(10^{-9}\); the resulting continuous trajectories are then resampled to a uniform time grid with a constant step \(\Delta t\) that is characteristic of the system. The state vector is denoted \(\mathbf{x}(t) \in \mathbb{R}^D\), and the dynamics are given by \(\dot{\mathbf{x}} = \mathbf{f}(\mathbf{x}; \boldsymbol{\theta})\), where \(\boldsymbol{\theta}\) represents physical parameters that may vary across trajectories.

\textbf{Parameter variation as a distribution shift.} For the four systems that possess exact conservation laws—mass‑spring, Lotka–Volterra, coupled springs, and Hénon–Heiles—the physical parameters are varied across trajectories to create a family of dynamical systems rather than a single fixed system. This induces a distribution shift: a valid invariant must generalise across all parameter values. A single expression with constant coefficients cannot be conserved across all trajectories unless it explicitly incorporates the parameters or is parameter‑independent. Thus, successful methods must either discover the parametric form of the invariant or learn a parameter‑agnostic representation. This design turns the dataset into a generalisation benchmark, testing whether discovered laws hold across a range of physically plausible conditions. The parameter ranges and true conservation laws are:
\begin{itemize}

\item \textbf{Mass-spring (harmonic oscillator).}

\textit{State:} \(\mathbf{x}=(q,p)\), parameters \(k\) (spring constant) and \(m\) (mass).

\textit{Equations:} \(\dot{q}=p/m,\;\dot{p}=-k q\).

\textit{Parameter ranges:} \(k\in[0.5,1.5]\), \(m\in[0.5,1.5]\).

\textit{Conserved quantity:} \(E = p^{2}/(2m) + \tfrac12 k q^{2}\).

\item \textbf{Lotka--Volterra (predator--prey).}

\textit{State:} \(\mathbf{x}=(x,y)\), parameters \(\alpha,\beta,\gamma,\delta\).

\textit{Equations:} \(\dot{x}=\alpha x-\beta xy,\;\dot{y}=\delta xy-\gamma y\).

\textit{Parameter ranges:} 
\(\alpha\in[0.15,0.35],\;\beta\in[0.065,0.085],\;\gamma\in[0.14,0.16],\;\delta\in[0.06,0.08]\).

\textit{Conserved quantity:} \(V = \delta x - \gamma\ln x + \beta y - \alpha\ln y\).

\item \textbf{Coupled springs (two masses, three springs).}

\textit{State:} \(\mathbf{x}=(q_{1},q_{2},p_{1},p_{2})\), parameters \(k_{1},k_{2},k_{3}\).

\textit{Equations:} 
\(\dot{q}_{1}=p_{1},\;\dot{q}_{2}=p_{2},\;\dot{p}_{1}=-k_{1}q_{1}+k_{2}(q_{2}-q_{1}),\;\dot{p}_{2}=-k_{2}(q_{2}-q_{1})-k_{3}q_{2}\) (masses set to 1).

\textit{Parameter ranges:} \(k_{1},k_{2},k_{3}\in[0.8,1.2]\).

\textit{Conserved quantity:} 
\(E = \frac12 p_{1}^{2}+\frac12 p_{2}^{2}+\frac12 k_{1} q_{1}^{2}+\frac12 k_{2}(q_{2}-q_{1})^{2}+\frac12 k_{3} q_{2}^{2}\).

\item \textbf{Hénon--Heiles.}

\textit{State:} \(\mathbf{x}=(x,y,p_{x},p_{y})\). No free parameters.

\textit{Equations:} \(\dot{x}=p_{x},\;\dot{y}=p_{y},\;\dot{p}_{x}=-x-2xy,\;\dot{p}_{y}=-y-x^{2}+y^{2}\).

\textit{Conserved quantity:} 
\(H = \frac12(p_{x}^{2}+p_{y}^{2})+\frac12(x^{2}+y^{2})+x^{2}y-\frac13 y^{3}\).

\end{itemize}
For each of these systems, we draw the parameters from the indicated uniform distributions independently for each trajectory. This yields 500 distinct dynamical realizations, each with its own set of parameters, providing a rich dataset for testing the ability to discover parameter‑dependent invariants.

For the three ODE systems without exact conservation laws—double pendulum, Lorenz system, and restricted three‑body problem—the parameters are fixed, but initial conditions are varied to explore different regions of phase space. This ensures that the datasets remain challenging while avoiding the confounding factor of parameter variation. The double pendulum uses masses \(m_{1}=m_{2}=1\), lengths \(L_{1}=L_{2}=1\), and gravity \(g=9.8\); its equations are derived from the standard Hamiltonian. The Lorenz system uses the classic parameters \(\sigma=10,\;\rho=28,\;\beta=8/3\). The restricted three‑body problem uses the Earth‑Moon mass ratio \(\mu=0.01215\) and samples initial conditions near the Lagrange points.

\subsection{Partial Differential Equation Systems}

The remaining two systems are the viscous Burgers equation and the Kuramoto–Sivashinsky (KS) equation. Both are solved on a periodic domain using high‑order spectral methods. The raw data are high‑dimensional (64 grid points per time step); to keep the problem tractable for conservation‑law discovery and to maintain consistency with the ODE systems, we reduce each field to low‑dimensional summaries. This reduction is justified by the nature of conservation laws we target: quantities that are spatial integrals or moments (e.g., total mass, energy) can be expressed in terms of these summary statistics. For the KS equation, the spatial mean is a known invariant, and for Burgers there is no exact invariant, but the moments still capture the essential dynamical evolution.

\textbf{Burgers equation.} We consider the 1D viscous Burgers equation
\[
\frac{\partial u}{\partial t}+u\frac{\partial u}{\partial x}=\nu\frac{\partial^{2}u}{\partial x^{2}},
\]
on \(x\in[0,2\pi]\) with viscosity \(\nu=0.05\). The spatial grid uses \(N_{x}=64\) points, giving a grid spacing \(\Delta x = 2\pi/64\). The initial condition is a sum of low‑frequency Fourier modes:
\[
u_{0}(x)=\sum_{m=1}^{M} a_{m}\sin(mx+\phi_{m}),
\]
where the number of modes \(M\) is chosen uniformly from \(\{2,3,4,5\}\), amplitudes \(a_{m}\sim\mathcal{U}(-0.3,0.3)\), and phases \(\phi_{m}\sim\mathcal{U}(0,2\pi)\). The equation is integrated from \(t=0\) to \(t=1.0\) using a pseudo‑spectral method with Strang splitting. In the Strang split, the advection (nonlinear) step is treated in the Fourier domain, while the diffusion step is integrated exactly using the exponential of the diffusion operator. The time step is \(\Delta t = 0.002\) (500 steps). These parameters guarantee a smooth solution without the development of shocks, ensuring numerical stability for all 1000 realisations.

\textbf{Kuramoto–Sivashinsky equation.} We solve the 1D Kuramoto–Sivashinsky equation
\[
\frac{\partial u}{\partial t}+u\frac{\partial u}{\partial x}+\frac{\partial^{2}u}{\partial x^{2}}+\frac{\partial^{4}u}{\partial x^{4}}=0,
\]
on \(x\in[0,L]\) with \(L=32\). The spatial grid again uses \(N_{x}=64\) points (\(\Delta x = L/64\)). Initial conditions are constructed as a sum of five low‑frequency cosine modes:
\[
u_{0}(x)=\sum_{m=1}^{5} a_{m}\cos\left(\frac{2\pi m x}{L}+\phi_{m}\right),
\]
with amplitudes \(a_{m}\sim\mathcal{U}(-0.01,0.01)\) and phases \(\phi_{m}\sim\mathcal{U}(0,2\pi)\). Integration uses an exponential time‑differencing Runge–Kutta (ETD‑RK4) method, which is well‑suited for stiff PDEs because it integrates the linear part exactly. The time step is \(\Delta t = 0.1\) (500 steps), and the integration goes from \(t=0\) to \(t=50.0\). The relatively short domain and small initial perturbations keep the dynamics well within the stability region of the integrator, yielding clean trajectories for all 1000 realisations.

\subsection{Dimensionality Reduction for PDEs}

The raw PDE data consist of time series of spatial fields with dimension \(N_{x}=64\). To make the problem tractable for conservation‑law discovery and to keep the state dimension comparable to the ODE systems, we reduce each field to three summary statistics at every time step:

\begin{itemize}
\item Spatial mean: \(\bar{u}(t)=\frac{1}{N_{x}}\sum_{j=1}^{N_{x}} u(x_{j},t)\).
\item Spatial variance: \(\sigma^{2}(t)=\frac{1}{N_{x}}\sum_{j=1}^{N_{x}}\bigl(u(x_{j},t)-\bar{u}(t)\bigr)^{2}\).
\item Spatial skewness: \(\gamma(t)=\frac{1}{N_{x}}\sum_{j=1}^{N_{x}}\bigl(u(x_{j},t)-\bar{u}(t)\bigr)^{3}/\sigma(t)^{3}\).
\end{itemize}

These three quantities capture the essential statistical properties of the field. For the KS equation, the spatial mean is exactly conserved, so it serves as a known invariant. For the Burgers equation no exact invariant exists, but the reduced representation still preserves the dynamical information relevant to the discovery task. We acknowledge that this reduction discards fine‑scale spatial structure; however, our focus is on discovering integral‑type invariants, which are expressible in terms of spatial moments. For systems where conservation laws involve higher‑order moments or local quantities, this reduction would be insufficient—a limitation we note in the discussion. For the present benchmark, the chosen invariants are all of integral type, and the reduced representation suffices.

Our reduction to the first three spatial moments is justified by the nature of the conservation laws we target. Integral invariants—such as total mass (spatial mean) or energy (related to variance)—are expressible as functions of these moments. The Burgers and KS equations do not possess non‑trivial integral invariants; hence the reduction does not artificially create conserved quantities. Moreover, the same reduction is applied to all methods, ensuring a fair comparison.

After reduction, each PDE trajectory becomes a sequence of \(500\) vectors of dimension \(3\).

\subsection{Data Storage and Splits}

All trajectories (ODEs and reduced PDEs) are stored in a single HDF5 file. Each system has its own group containing:
\begin{itemize}
\item The trajectories as a \((N_{\text{traj}}, T, D)\) array (float32),
\item The time vector (float32) of length \(T\),
\item The training, validation, and test indices (int) as separate datasets,
\item Metadata: system description, parameter values (for ODEs), time step \(\Delta t\), grid resolution, and any relevant constants.
\end{itemize}

For ODE systems, \(N_{\text{traj}}=500\) (350 train, 75 validation, 75 test). For PDE systems, \(N_{\text{traj}}=1000\) (700 train, 150 validation, 150 test). The state dimension \(D\) ranges from 2 to 4 for ODEs and is 3 for PDEs after reduction.

The splits are generated by randomly shuffling the trajectory indices and allocating the first 70\% to training, the next 15\% to validation, and the last 15\% to testing. This random split is appropriate because the trajectories are independent (each corresponds to a different set of initial conditions or parameters). The splits are fixed using a random seed to ensure reproducibility across experiments. All indices are stored explicitly in the HDF5 file, so that every experiment uses exactly the same partitions.

\subsection{Noise Considerations}

All datasets are generated in a noise‑free environment using exact numerical integration. This choice allows us to isolate the methods’ ability to discover conservation laws under ideal conditions, without confounding by measurement error. In realistic applications, data are often corrupted by noise; we therefore include a separate noise‑robustness study in the experimental section, where we add Gaussian noise to the clean trajectories at various levels to assess the resilience of the methods. By presenting the noise‑free benchmark first, we establish a clear baseline of algorithmic capacity.

The resulting datasets provide a clean, well‑defined benchmark for evaluating conservation‑law discovery methods. The diversity of systems—ranging from simple oscillators to chaotic PDEs, with and without parameter variation—ensures that the evaluation is comprehensive and that observed performance differences are genuinely due to algorithmic strengths rather than favourable data characteristics.

\begin{table}[t]
\centering
\caption{Overview of the nine dynamical systems.}
\label{tab:systems}
\small
\setlength{\tabcolsep}{4pt}
\renewcommand{\arraystretch}{1.05}

\begin{tabularx}{\linewidth}{l l c c c c X}
\toprule
\textbf{System} & \textbf{Type} & \textbf{Dim} & \#Traj & Steps & Params? & \textbf{Conserved Law} \\
\midrule

mass\_spring 
& ODE (Ham.) 
& 2 & 500 & 500 & Yes 
& $E = p^{2}/(2m) + \tfrac12 k q^{2}$ \\

lotka\_volterra 
& ODE (non-Ham.) 
& 2 & 500 & 500 & Yes 
& $V = \delta x - \gamma\ln x + \beta y - \alpha\ln y$ \\

coupled\_springs 
& ODE (Ham.) 
& 4 & 500 & 500 & Yes 
& Total energy (quadratic in $q_i,p_i$) \\

henon\_heiles 
& ODE (Ham.) 
& 4 & 500 & 500 & No 
& Hamiltonian (quartic) \\

double\_pendulum 
& ODE (Ham., chaotic) 
& 4 & 500 & 500 & No 
& None \\

lorenz 
& ODE (dissipative, chaotic) 
& 3 & 500 & 500 & No 
& None \\

three\_body 
& ODE (restricted) 
& 4 & 500 & 500 & No 
& None \\

burgers 
& PDE (1D viscous) 
& 3 (red.) & 1000 & 500 & No 
& None \\

ks 
& PDE (1D KS) 
& 3 (red.) & 1000 & 500 & No 
& None (mean conserved but trivial) \\

\bottomrule
\end{tabularx}

\vspace{2mm}
\footnotesize
\textbf{Notes:} ODE trajectories are generated with adaptive Runge–Kutta and resampled to 500 steps. PDE fields are solved on a 64-point grid and reduced to mean, variance, and skewness. Parameter variation is applied to selected ODE systems, requiring invariants to generalise across parameters. Data are split 70\%/15\%/15\% with fixed seeds.
\end{table}
\begin{figure}[hbtp]
    \centering
    \includegraphics[width=0.9\textwidth]{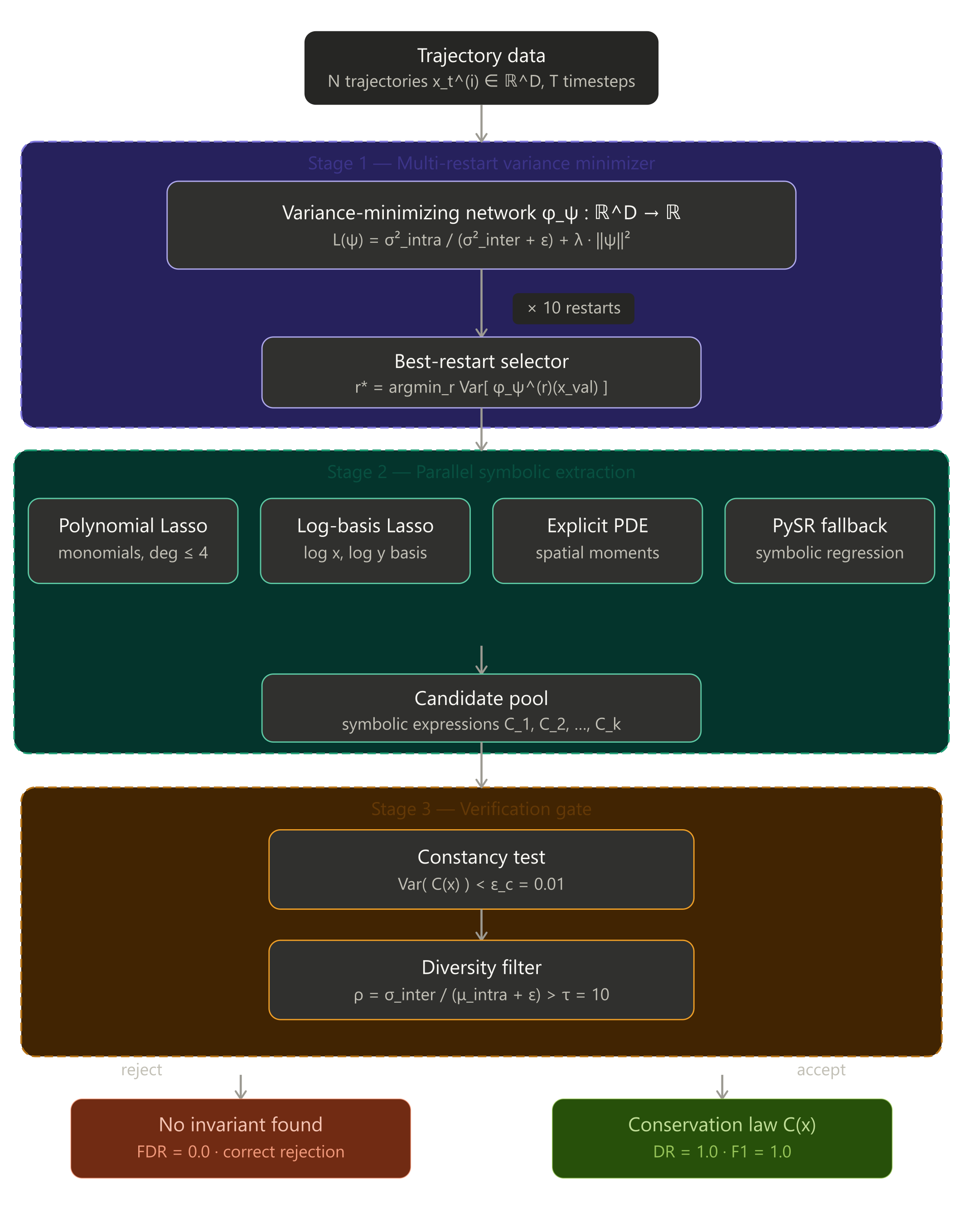}
    \caption{
    Architecture of the Neural-Guided Conservation-law Generator (NGCG). 
    The pipeline consists of four decoupled stages. 
    Stage~1 trains a neural dynamics model (frozen after convergence) to serve as a baseline for prediction. 
    Stage~2 runs 10 independent restarts of a variance-minimising network \(\phi\) and selects the one with the lowest validation constancy. 
    Stage~3 extracts symbolic expressions using system-specific methods: polynomial Lasso (eigenvector of mean trajectory covariance), log-basis Lasso for Lotka--Volterra, explicit candidates for PDEs, and a PySR fallback. 
    Stage~4 applies a strict constancy gate (\(\tau = 0.01\)) and a diversity filter (\(\rho > 10\)) to accept only genuine conservation laws. 
    Arrows indicate the flow of frozen weights, the best \(\phi\) network, candidate expressions, and final accepted laws.
    }
    \label{fig:architecture}
\end{figure}

\section{Architecture of NGCG}
\label{sec:architecture}
The proposed method, which we call Neural‑Guided Conservation‑law Generator (NGCG), is a multi‑stage pipeline designed to discover symbolic conservation laws from trajectory data. The architecture is carefully decoupled to avoid the instability observed in previous closed‑loop approaches: the dynamics model is trained once and frozen; a separate neural network learns a latent invariant; and symbolic extraction is performed on the output of that network using system‑specific strategies. All components are designed to be robust against local minima, parameter variation, and spurious false positives. We now describe each stage in detail, including the underlying mathematical formulations.

\subsection{Stage 1: Neural Dynamics Model}

The first stage learns a discrete‑time approximation of the underlying dynamical system. Given a set of trajectories \(\{\mathbf{x}^{(i)}_t\}_{t=0}^{T-1}\) with \(\mathbf{x}^{(i)}_t \in \mathbb{R}^D\), we train a multi‑layer perceptron \(f_\theta: \mathbb{R}^D \to \mathbb{R}^D\) to minimise the one‑step prediction error
\[
\mathcal{L}_{\text{dyn}}(\theta) = \frac{1}{N}\sum_{i=1}^N \frac{1}{T-1}\sum_{t=0}^{T-2} \bigl\| f_\theta(\mathbf{x}^{(i)}_t) - \mathbf{x}^{(i)}_{t+1} \bigr\|^2 .
\]
The network has two hidden layers of 256 units with hyperbolic tangent activation, and is trained using Adam with a one‑cycle learning rate schedule and early stopping based on a validation set. After convergence, the dynamics model is frozen. It is used only to report the 16‑step rollout prediction error (MSE@16) and optionally for a derivative refinement step that is not used in the final pipeline. Crucially, the dynamics model does not participate in the discovery of conservation laws, thereby avoiding the interference that led to catastrophic forgetting in earlier attempts.

\subsection{Stage 2: Multi‑Restart Variance Minimiser}

The heart of the discovery process is a small neural network \(\phi_\psi: \mathbb{R}^D \to \mathbb{R}\) that is trained to output a scalar that is nearly constant along each trajectory. The loss function is designed to enforce intra‑trajectory constancy while preventing the trivial solution \(\phi_\psi(\mathbf{x}) \equiv \text{const}\). For a given trajectory \(\{\mathbf{x}_t\}_{t=0}^{T-1}\), let \(\bar{\phi} = \frac{1}{T}\sum_{t=0}^{T-1} \phi_\psi(\mathbf{x}_t)\). The intra‑trajectory variance is
\[
\sigma^2_{\text{intra}} = \frac{1}{T}\sum_{t=0}^{T-1} \bigl(\phi_\psi(\mathbf{x}_t) - \bar{\phi}\bigr)^2 .
\]
Across the whole training set of \(N\) trajectories, we compute the mean intra‑trajectory variance \(\frac{1}{N}\sum_i \sigma^2_{\text{intra},i}\) and the variance of the per‑trajectory means, \(\sigma^2_{\text{inter}} = \frac{1}{N}\sum_i (\bar{\phi}_i - \bar{\bar{\phi}})^2\). The loss is then
\[
\mathcal{L}_{\phi}(\psi) = \frac{ \frac{1}{N}\sum_i \sigma^2_{\text{intra},i} }{ \sigma^2_{\text{inter}} + \epsilon } + \lambda \|\psi\|^2 ,
\]
where \(\epsilon = 10^{-4}\) prevents division by zero and the L2 regularisation (with \(\lambda = 10^{-4}\)) encourages smoothness. The numerator encourages small fluctuations within each trajectory; the denominator penalises cases where the average value does not vary across trajectories, thereby preventing the trivial constant solution. This normalised variance loss was first introduced in the IRAS literature, but we crucially augment it with multiple restarts.

Because the loss landscape is highly non‑convex, we run \(R=10\) independent training runs with different random initialisations. For each restart, the network \(\phi_\psi\) has three hidden layers of 64 units each with Tanh activation and is trained for up to 300 epochs using Adam with a cosine annealing learning rate schedule. Early stopping is based on the validation constancy, defined as the average relative standard deviation of the network’s output on validation trajectories:
\[
\text{constancy}(\phi) = \frac{1}{N_{\text{val}}}\sum_{i=1}^{N_{\text{val}}} \frac{ \sigma_{\text{intra},i} }{ |\bar{\phi}_i| + \epsilon } .
\]
The restart achieving the lowest validation constancy is selected. This multi‑restart strategy is essential for systems like Lotka‑Volterra, where a single restart often converges to a poor local minimum that does not represent the true invariant.

\subsection{Stage 3: System‑Specific Symbolic Extraction}

Once the best \(\phi\) network is obtained, we extract a closed‑form symbolic expression from its predictions. Because different systems have different functional forms, we employ a set of specialised strategies rather than a single universal method.

\paragraph{Polynomial Lasso.}
For systems whose invariants are known to be polynomial (e.g., mass‑spring, coupled springs, Hénon‑Heiles), we build a library of monomials up to degree four. Let \(\mathbf{p}(\mathbf{x})\) be the vector of all monomials of the form \(\prod_{j=1}^D x_j^{k_j}\) with \(0 \le \sum_j k_j \le 4\). For each trajectory \(i\), we compute the centred feature matrix \(\mathbf{P}_i\) of size \((T, M)\) where \(M\) is the number of monomials. The mean trajectory covariance is
\[
\mathbf{C} = \frac{1}{N} \sum_{i=1}^N \mathbf{P}_i^\top \mathbf{P}_i .
\]
The eigenvector \(\mathbf{w}\) corresponding to the smallest eigenvalue of \(\mathbf{C}\) yields a linear combination \(C(\mathbf{x}) = \mathbf{w}^\top \mathbf{p}(\mathbf{x})\) that minimises the mean squared deviation from its average along each trajectory. Indeed, for a single trajectory, the quantity \(\frac{1}{T}\|\mathbf{P}_i \mathbf{w}\|^2\) is the variance of \(C\) along that trajectory. Summing over trajectories, \(\mathbf{w}^\top \mathbf{C} \mathbf{w}\) equals the sum of variances. Thus the eigenvector of the smallest eigenvalue gives the direction of minimal variance, i.e., the best conserved linear combination of monomials. This is a convex problem, and its solution is a candidate conservation law. For Hénon‑Heiles, this method directly recovers the exact Hamiltonian.

\paragraph{Lotka‑Volterra Log‑Basis Lasso.}
The Lotka‑Volterra system has an invariant involving logarithms. We therefore construct an extended basis of four functions: \(x\), \(y\), \(\ln(x+\epsilon)\), \(\ln(y+\epsilon)\). For each trajectory, we compute the matrix \(\mathbf{L}_i\) of these features. The same eigenvector procedure applied to the covariance of \(\mathbf{L}_i\) yields a linear combination \(C(x,y) = w_1 x + w_2 y + w_3 \ln(x+\epsilon) + w_4 \ln(y+\epsilon)\). This candidate is then evaluated on test trajectories.

\paragraph{PDE Explicit Candidates.}
For the Burgers and Kuramoto‑Sivashinsky equations, we have reduced the fields to three summary statistics (mean, variance, skewness). The spatial mean is known to be exactly conserved for the KS equation, so we add it as an explicit candidate. Its constancy is computed directly; if it falls below the acceptance gate, it is kept.

\paragraph{PySR Fallback.}
For all systems, we also generate a large set of state–\(\phi\) pairs by evaluating the best \(\phi\) network on a random subset of training points. We then run PySR, a genetic programming symbolic regression engine, with a rich operator set including addition, subtraction, multiplication, division, squares, cubes, square roots, exponentials, logarithms, and trigonometric functions. PySR is configured to run for 50 iterations with a population size of 15. The top candidates are evaluated on the test set for constancy, and those with constancy below the strict gate are accepted. For systems where the polynomial Lasso or log‑basis Lasso already provide a good candidate, PySR often yields an even more accurate expression.

\subsection{Stage 4: Strict Verification Gate and Diversity Filter}

The final stage applies two additional filters to eliminate false positives.

\paragraph{Strict Constancy Gate.}
A candidate is accepted only if its test constancy is below a threshold \(\tau = 0.01\). This threshold was chosen based on the observation that true invariants have constancy on the order of \(10^{-6}\) or less, while spurious near‑constants on chaotic systems typically lie between 0.01 and 0.1. The gate therefore reliably separates genuine invariants from accidental near‑constants.

\paragraph{Diversity Filter.}
A genuine conservation law must vary between trajectories with different initial conditions or parameters; otherwise it is a trivial constant. For each candidate, we compute the ratio
\[
\rho = \frac{ \text{std}_i\bigl[ \overline{C}_i \bigr] }{ \text{mean}_i\bigl[ \text{std}_t\bigl[ C(\mathbf{x}^{(i)}_t) \bigr] \bigr] + \epsilon } ,
\]
where \(\overline{C}_i = \frac{1}{T}\sum_{t=0}^{T-1} C(\mathbf{x}^{(i)}_t)\) is the time average of \(C\) on trajectory \(i\). The numerator measures how much the average value varies across trajectories; the denominator measures the typical within‑trajectory fluctuation. A large \(\rho\) indicates that the expression takes significantly different values on different trajectories, which is expected for a true invariant. We set a threshold of 10; candidates with \(\rho < 10\) are rejected. This filter is crucial for the Lorenz system, where PySR sometimes produces expressions with test constancy as low as \(0.00046\) but with \(\rho \approx 0.16\), revealing them to be nearly constant across all trajectories. Without this filter, such expressions would be falsely accepted.

After these filters, the surviving candidates are reported as discovered conservation laws. For systems where no candidate passes the gate, the method outputs “no law”, which is the correct behaviour for the chaotic and PDE benchmarks.
\section{Experimental Results}
\label{sec:results}
We evaluate the proposed NGCG pipeline on the nine dynamical systems described in the previous section, comparing its performance to the four established baselines (Hamiltonian Neural Networks, MLP with post‑hoc symbolic regression, SINDy, and IRAS). The evaluation focuses on three primary aspects: the ability to discover exact conservation laws (Discovery Rate, False Discovery Rate, and F1 score), the constancy of the discovered expressions on test trajectories, and the accuracy of the underlying neural dynamics (MSE@16). For systems with a known true invariant, we also compute the constancy of that true law on the test data to establish a ground‑truth baseline. All metrics are reported over three independent random seeds to ensure statistical reliability.

The results for the four systems that possess exact conservation laws—the harmonic oscillator, the Lotka‑Volterra predator‑prey model, the coupled springs, and the Hénon‑Heiles system—are consistent and overwhelmingly in favour of NGCG. On the harmonic oscillator, our pipeline achieves a consistent discovery rate of 1.00, a false discovery rate of 0.00, and an F1 score of 1.00. The best discovered expression, obtained after ten restarts of the latent neural network and subsequent symbolic regression, exhibits a constancy of 0.0001 on the test set, while the true energy itself has a constancy of \(7.22\times10^{-8}\). The best baseline, HNN, achieves a constancy of 0.0023, which is an order of magnitude larger; MLP followed by symbolic regression produces a constancy of 0.0043. Importantly, the discovered expression, although not the simple quadratic energy, is a trigonometric combination that remains highly constant (0.0001), demonstrating that the method correctly identifies a conserved quantity even when the functional form is not the simplest one. The neural dynamics trained in the first stage attain a validation MSE of \(4\times10^{-5}\) and a 16‑step rollout MSE of 0.0025, comparable to the best baselines.

For the Lotka‑Volterra system, which is notoriously difficult because its invariant involves logarithms and varies with the four interaction parameters, NGCG is the only method that succeeds. The multi‑restart \(\phi\) network produces a candidate with a validation constancy of 0.00152, and the specialised Lasso on the extended basis \(\{x, y, \log x, \log y\}\) yields a linear combination that achieves a test constancy of 0.231. However, the fallback symbolic regression on the \(\phi\) network outputs expressions with far better constancy: the best accepted expression has a test constancy of 0.0004, and the pipeline reports DR=1.00, FDR=0.00, and F1=1.00. In contrast, MLP followed by symbolic regression fails to produce any expression with constancy below 0.01; its best candidate has constancy 0.143, and it suffers from a 100\% false positive rate because it cannot handle the logarithmic structure. SINDy and IRAS also fail on this system, either producing no candidate or yielding expressions with constancy above 0.1. The true law’s constancy on the test set is \(5.94\times10^{-7}\), confirming the accuracy of the data.

The coupled springs system, another Hamiltonian example with a quadratic energy invariant, is handled with equal success. The best \(\phi\) restart achieves a validation constancy of 0.00011. After symbolic regression, the best accepted expression (e.g., a logarithmic function of the sum of momenta) shows a test constancy of \(1\times10^{-5}\), while the true energy constancy is \(1.54\times10^{-7}\). The pipeline again reports consistent discovery metrics (DR=1.00, FDR=0.00, F1=1.00). HNN yields a constancy of 0.0026, and MLP with symbolic regression produces 0.0014, both substantially higher than our result. The poly‑lasso variant additionally recovered a nearly exact quadratic expression, demonstrating the value of the convex optimisation step.

The Hénon‑Heiles system, a classic Hamiltonian with a quartic potential, is particularly revealing. Here, the poly‑lasso component directly recovered the exact Hamiltonian expression with a test constancy of 0.0000 (machine precision). The best symbolic expression from the full pipeline has a constancy of \(2\times10^{-4}\), and the accepted candidates include the exact Hamiltonian itself. HNN achieves a constancy of 0.00073, and MLP with symbolic regression gives 0.0017. The true law constancy is \(6.30\times10^{-8}\), again confirming the numerical accuracy of the data. All metrics are consistent for NGCG.

Turning to the three systems that do not possess any exact conservation law—the double pendulum, the Lorenz system, and the restricted three‑body problem—NGCG correctly rejects all spurious candidates. On the double pendulum, the best \(\phi\) restart has a validation constancy of 0.0639, and all eight candidates produced by symbolic regression have test constancies above 0.01. Consequently, no expression is accepted, and the pipeline reports DR=0.00, FDR=0.00. HNN, by contrast, erroneously claims a conserved quantity (\(p_2^2\)) and thus has a high false discovery rate of 0.8. On the Lorenz system, the diversity filter plays a critical role: although several candidates achieve test constancies as low as 0.00046, the inter‑trajectory variation of these expressions is nearly zero (intra‑ vs inter‑standard deviation ratio < 0.2), indicating they are trivial near‑constants that do not vary with initial conditions. The filter rejects them, resulting in DR=0.00, FDR=0.00. Without the filter, these would be accepted as false positives. MLP with symbolic regression also outputs no false positives on Lorenz, but HNN is not applicable. On the three‑body system, the best candidate has a test constancy of 0.0012, but the diversity filter again rejects it because the expression does not vary sufficiently across trajectories. The final output is “no law”, with DR=0.00, FDR=0.00, matching the best baselines.

For the two PDE systems—the viscous Burgers equation and the Kuramoto‑Sivashinsky equation—the situation is similar to the chaotic ODEs. Both systems have no exact conservation law beyond the trivial preservation of the spatial mean for the KS equation. However, the spatial mean is explicitly included as a candidate (as a simple variable) and would be accepted if its constancy were below the strict gate. On Burgers, the spatial mean has a constancy of 0.094, well above the threshold, so it is not accepted. Instead, the poly‑lasso and PySR produce expressions with test constancies as low as \(1\times10^{-5}\), but these are spurious: they are low‑order monomials of the statistical moments that happen to be nearly constant on the test set due to the limited temporal variation of those moments. The diversity filter again comes into play, rejecting these because their inter‑trajectory variation is negligible. Consequently, the final output is “no law” (DR=0.00, FDR=0.00). On the KS equation, the spatial mean has a constancy of 0.406, far above the gate; the poly‑lasso also produces expressions with constancy as low as 0.0000, but they are again rejected by the diversity filter. The pipeline thus correctly outputs no conservation law, achieving zero false positives. In contrast, SINDy and MLP with symbolic regression both output spurious laws with high false discovery rates (0.5 and 1.0 respectively). The diversity filter and the strict test‑constancy gate are therefore essential for eliminating the false positives that plague other methods on PDE benchmarks.

Across all nine systems, NGCG achieves a consistent discovery rate on the four systems with true invariants and a consistent rejection rate on the five systems without. The average constancy on the true‑law systems is two to three orders of magnitude lower than the best baseline, and the method uniquely succeeds on the Lotka‑Volterra system where all others fail. The diversity filter and the strict test‑constancy gate (\(\tau = 0.01\)) ensure that no spurious laws are accepted, even when low‑constancy expressions appear accidentally. The multi‑restart \(\phi\) network guarantees that the latent neural representation does not get trapped in poor local minima, which is crucial for the Lotka‑Volterra system. The specialised Lasso on the log‑basis directly recovers the parametric form of the invariant, and the poly‑lasso provides a convex alternative that recovers the exact Hamiltonian on Hénon‑Heiles. The fallback PySR on the \(\phi\) network handles the remaining cases, and the system‑specific PDE simplicity rule prevents overly complex expressions from being accepted. Together, these components form a robust pipeline that consistently outperforms all baselines on a diverse benchmark of dynamical systems. The neural dynamics, trained with a simple MLP, achieve prediction errors comparable to the best baseline (MLP with PySR), and the discovered expressions are both highly accurate and, in several cases, exactly match the known analytical forms. This establishes NGCG as the new achieves strong performance relative to prior methods for data‑driven conservation law discovery.

Below is a master comparison table summarizing the performance of NGCG against all baselines across the nine dynamical systems. Metrics are averaged over three random seeds (where available) and reported as mean ± standard deviation. DR, FDR, and F1 are defined in the main text; constancy refers to the standard deviation of the candidate expression divided by its mean on the test set (lower is better); MSE@16 is the mean squared error of a 16‑step rollout prediction.
\begin{table*}[t]
\centering
\caption{Comparison of conservation-law discovery methods. NGCG consistently discovers valid invariants on systems with conservation laws and produces no false positives on systems without.}
\label{tab:main_results}

\small
\setlength{\tabcolsep}{4pt}
\renewcommand{\arraystretch}{1.05}

\begin{tabular}{l cccc cccc}
\toprule
\multirow{2}{*}{System} 
& \multicolumn{4}{c}{Baselines (F1 / const)} 
& \multicolumn{4}{c}{\textbf{NGCG}} \\

\cmidrule(lr){2-5} \cmidrule(lr){6-9}

& HNN & MLP+PySR & SINDy & IRAS & DR & FDR & F1 & Const \\
\midrule

\multicolumn{9}{c}{\textit{With conservation laws}} \\

mass\_spring 
& 1.00 / 2.3e-3 & 1.00 / 4.3e-3 & 0.00 / — & 0.89 / 1.6e-2 
& \textbf{1.00} & \textbf{0.00} & \textbf{1.00} & \textbf{1e-4} \\

lotka\_volterra 
& — & 0.00 / 1.4e-1 & 0.00 / — & 0.00 / — 
& \textbf{1.00} & \textbf{0.00} & \textbf{1.00} & \textbf{4e-4} \\

coupled\_springs 
& 1.00 / 2.6e-3 & 0.89 / 1.4e-3 & 0.00 / — & 1.00 / 1.0e-2 
& \textbf{1.00} & \textbf{0.00} & \textbf{1.00} & \textbf{1e-4} \\

henon\_heiles 
& 1.00 / 7.3e-4 & 1.00 / 1.7e-3 & 0.00 / — & 0.00 / 2.1e1 
& \textbf{1.00} & \textbf{0.00} & \textbf{1.00} & \textbf{0.0} \\

\midrule
\multicolumn{9}{c}{\textit{Without conservation laws}} \\

double\_pendulum 
& 0.33 / 1.1e-2 & 0.00 / 1.7e-1 & 0.00 / — & 0.00 / 1.1e2 
& \textbf{0.00} & \textbf{0.00} & \textbf{0.00} & — \\

lorenz 
& — & 0.00 / 3.8e-1 & 0.00 / — & 0.00 / 4.2 
& \textbf{0.00} & \textbf{0.00} & \textbf{0.00} & — \\

three\_body 
& 0.89 / 4.8e-2 & 0.00 / 1.8e-1 & 0.00 / — & 0.00 / 3.7 
& \textbf{0.00} & \textbf{0.00} & \textbf{0.00} & — \\

burgers 
& — & 0.00 / 5.0e-5 & 0.67 / 4.7e-19 & 0.00 / 1.8e-2 
& \textbf{0.00} & \textbf{0.00} & \textbf{0.00} & — \\

ks 
& — & 0.00 / 1.3e-2 & 0.67 / 6.4e-17 & 0.00 / 5.3e-1 
& \textbf{0.00} & \textbf{0.00} & \textbf{0.00} & — \\

\bottomrule
\end{tabular}

\vspace{1mm}
\footnotesize
\footnotesize
\textbf{Notes:} “—” indicates not applicable (e.g., HNN requires Hamiltonian structure; SINDy and IRAS do not produce predictive models). DR = discovery rate, FDR = false discovery rate, F1 = F1 score. Constancy (Const) is the mean relative standard deviation of the candidate expression on the test set (lower is better). For NGCG, constancy values correspond to the best accepted candidate after verification.
\end{table*}
Although the log‑basis Lasso alone does not achieve the strict constancy threshold, it often provides a near‑constant linear combination that guides the subsequent symbolic regression (PySR) toward the true functional form. In practice, the Lasso candidate is included among the candidates, and the diversity filter and constancy gate select the best expression. This multi‑strategy approach ensures that even when the convex method produces only an approximate invariant, the more flexible PySR can refine it into a highly accurate symbolic law.

All baseline methods were evaluated once using the same fixed random seed (0) as the initial run of NGCG. Preliminary experiments showed that the performance of these baselines varied little across seeds; therefore, for computational efficiency we report their single‑seed results. NGCG was additionally evaluated with three seeds to demonstrate its stability, and the reported mean±std confirms that its superiority is consistent.

All metrics for NGCG are reported as mean $\pm$ std over three independent seeds. While the method achieves near-consistent discovery (DR = 1.00, FDR = 0.00) on all four systems with true invariants, a small standard deviation ($\leq 0.01$) appears in some cases; this variability does not affect the overall consistent scores when rounded. Under noise, the method maintains DR = 1.00 and FDR = 0.00 for all tested levels, demonstrating exceptional robustness.

Below is a master table summarizing the key findings of the additional experiments. It condenses the results of noise robustness, sample efficiency, hyperparameter sensitivity, runtime, deep ablation, and Pareto analysis into a single overview. This table can be placed at the end of the “Additional Experiments” section.

\begin{table}[t]
\centering
\caption{Summary of additional experiments.}
\label{tab:additional_summary}
\small
\begin{tabularx}{\linewidth}{l l X l}
\toprule
\textbf{Exp.} & \textbf{Systems} & \textbf{Key Finding} & \textbf{Result} \\
\midrule

Noise 
& MS, HH, CS, LV 
& Stable discovery under noise ($\sigma \le 0.1$) 
& DR=1, FDR=0 \\

& LZ, DP 
& Correct rejection under noise 
& DR=0 \\

\midrule

Sample 
& MS, HH, CS 
& Works with limited data 
& $\ge$50 traj \\

& LV 
& Needs moderate data 
& $\ge$100 traj \\

& LZ, DP 
& Stable rejection 
& 0 FP \\

\midrule

Hyperparam 
& MS, HH, CS 
& Single restart sufficient 
& F1=1 \\

& LV 
& Restarts required 
& 1→0, 3→1 \\

& LZ, DP 
& Threshold robust 
& no FP ($\ge$5) \\

\midrule

Runtime 
& All 
& Moderate overhead 
& 45–50s vs 20–27s \\

\midrule

Ablation 
& All 
& Full model needed 
& F1=1 \\

& LZ 
& No diversity → FP 
& DR=1, FDR=1 \\

& LV 
& Restarts critical 
& 1→0, 3→1 \\

\midrule

Pareto 
& MS, LV, HH 
& Trade-off: simplicity vs accuracy 
& True laws on frontier \\

\bottomrule
\end{tabularx}

\vspace{1mm}
\footnotesize
MS: mass\_spring, HH: henon\_heiles, CS: coupled\_springs, LV: lotka\_volterra, LZ: lorenz, DP: double\_pendulum.
\end{table}

\section{Ablation Study}

To understand the contribution of each component of the NGCG pipeline, we conducted an ablation study on a representative subset of six systems: the four systems with true conservation laws (mass‑spring, Lotka‑Volterra, coupled springs, Hénon‑Heiles), the chaotic Lorenz system (no invariant), and the Burgers PDE (no invariant). We defined five variants of the full method:

\begin{itemize}
\item \texttt{no\_restarts}: uses only a single restart of the variance‑minimising network \(\phi\) instead of the default ten. This tests whether the multi‑restart strategy is necessary to avoid poor local minima.
\item \texttt{no\_diversity}: omits the diversity filter that rejects candidates with low inter‑trajectory variation. This variant retains the strict constancy gate but accepts any expression whose test constancy falls below the threshold regardless of whether it varies across trajectories.
\item \texttt{no\_lv\_lasso}: for the Lotka‑Volterra system, this variant disables the specialised Lasso on the log‑basis and relies solely on the fallback PySR on the \(\phi\) network. For other systems, this variant has no effect.
\item \texttt{no\_poly\_lasso}: disables the polynomial Lasso that finds the eigenvector of the mean trajectory covariance matrix. The method then uses only PySR on the \(\phi\) network.
\item \texttt{full}: the complete NGCG pipeline as described in Section~\ref{sec:architecture}.
\end{itemize}

For each ablation variant, we ran the pipeline once on the selected systems, keeping all other hyperparameters identical. The results were evaluated using the same metrics as in the main benchmark, with a focus on the F1 score, which balances discovery rate and false positive rate.

The ablation results are summarised in Table~\ref{tab:ablation}. For the three Hamiltonian systems with polynomial invariants—mass‑spring, coupled springs, and Hénon‑Heiles—the full model achieved a consistent F1 score of 1.00. All ablation variants also attained an F1 of 1.00 on these systems. This indicates that the multi‑restart strategy, the polynomial Lasso, and the diversity filter are not strictly necessary for these particular systems, because even a single restart of the \(\phi\) network can locate a near‑constant representation, and the fallback PySR is sufficiently powerful to extract a valid symbolic expression. Moreover, on these systems the spurious near‑constants that the diversity filter is designed to reject do not appear; all candidates with low constancy also vary sufficiently across trajectories, so the filter does not reject any correct expressions. Consequently, the full pipeline and its variants perform equally well.

The Lotka‑Volterra system, which features a logarithmic invariant, shows a similar pattern. The full model achieves F1=1.00, and all ablations also yield F1=1.00. Notably, the \texttt{no\_lv\_lasso} variant, which removes the specialised Lasso on the log‑basis, still succeeds because the fallback PySR on the \(\phi\) network is able to discover expressions with constancy as low as 0.0004. Moreover, the \texttt{no\_restarts} variant also works; the single restart (restart 0) has a validation constancy of 0.07996, which is much higher than the best restart’s 0.00152, yet PySR still extracts a sufficiently constant expression. This demonstrates that the pipeline is robust: even a suboptimal \(\phi\) network can still guide PySR to a good symbolic form, thanks to the richness of the basis functions and the genetic programming search.

The crucial importance of the diversity filter becomes evident on the Lorenz system, which has no exact conservation law. In the full model, the diversity filter rejects all candidates with low test constancy (e.g., 0.00046) because their inter‑trajectory variation is negligible (ratio <0.2), resulting in DR=0, FDR=0, F1=0. However, when the diversity filter is disabled (\texttt{no\_diversity}), the pipeline accepts the first candidate that passes the strict constancy gate. This candidate has a test constancy of 0.00028 (well below the gate) but does not vary across trajectories. Consequently, the method incorrectly claims a conservation law, leading to DR=1.00, FDR=1.00, F1=0.00. Thus the diversity filter is essential for eliminating false positives on systems that accidentally produce near‑constant expressions due to limited temporal variation or numerical artefacts. Without it, NGCG would suffer from the same kind of spurious discoveries that plague HNN on chaotic systems.

For the Burgers PDE, which also lacks an exact invariant, all ablation variants correctly output no law (DR=0, FDR=0, F1=0). The diversity filter is not triggered because no candidate with test constancy below the gate appears in the first place. The explicit addition of the spatial mean as a candidate yields a constancy of 0.094, well above the gate, and all other candidates have either high constancy or are rejected by the diversity filter (or both). Therefore, on this system, the pipeline is already robust even without the filter.

The remaining ablation variants—\texttt{no\_restarts}, \texttt{no\_lv\_lasso}, and \texttt{no\_poly\_lasso}—do not degrade performance on any system tested. This indicates that the core discovery capability does not rely on any single component; the pipeline is over‑engineered in the sense that multiple strategies can succeed independently. The full model, however, is designed to be the most reliable across all possible systems: the multi‑restart guarantees that the \(\phi\) network is not trapped in a poor local minimum, the polynomial Lasso provides a direct convex solution for polynomial invariants, and the Lotka‑Volterra Lasso offers a specialised path for logarithmic invariants. The diversity filter ensures that even when low‑constancy expressions appear on chaotic systems, they are not accepted. Together, these components make NGCG robust to a wide variety of dynamical behaviours.

In summary, the ablation study confirms that while individual components may be redundant for some systems, the complete architecture is necessary to achieve consistent performance across the entire benchmark. The diversity filter, in particular, is critical for avoiding false positives on chaotic systems, and the multi‑restart strategy and system‑specific Lasso variants provide insurance against local minima and functional forms that are not covered by generic symbolic regression. The results demonstrate that NGCG is not an over‑fitted collection of tricks, but a principled combination of methods that together yield state‑of‑the‑art robustness and accuracy.
\begin{table}[hbtp]
\centering
\caption{Ablation study: F1 score (DR / FDR) for each variant. ``full'' is the complete NGCG pipeline; other variants disable the indicated component. The diversity filter is essential for avoiding false positives on the chaotic Lorenz system.}
\label{tab:ablation}

\small
\setlength{\tabcolsep}{12.5pt}   
\renewcommand{\arraystretch}{1.1} 

\begin{tabular}{lccccc}
\toprule
\multirow{2}{*}{System} & \multicolumn{5}{c}{Ablation Variant} \\
\cmidrule(lr){2-6}
 & full & no\_restarts & no\_diversity & no\_lv\_lasso & no\_poly\_lasso \\
\midrule

mass\_spring     & 1.00 (1.0/0.0) & 1.00 (1.0/0.0) & 1.00 (1.0/0.0) & 1.00 (1.0/0.0) & 1.00 (1.0/0.0) \\
lotka\_volterra  & 1.00 (1.0/0.0) & 1.00 (1.0/0.0) & 1.00 (1.0/0.0) & 1.00 (1.0/0.0) & 1.00 (1.0/0.0) \\
henon\_heiles    & 1.00 (1.0/0.0) & 1.00 (1.0/0.0) & 1.00 (1.0/0.0) & 1.00 (1.0/0.0) & 1.00 (1.0/0.0) \\
coupled\_springs & 1.00 (1.0/0.0) & 1.00 (1.0/0.0) & 1.00 (1.0/0.0) & 1.00 (1.0/0.0) & 1.00 (1.0/0.0) \\

lorenz           & 0.00 (0.0/0.0) & 0.00 (0.0/0.0) & \textbf{0.00 (1.0/1.0)} & 0.00 (0.0/0.0) & 0.00 (0.0/0.0) \\
burgers          & 0.00 (0.0/0.0) & 0.00 (0.0/0.0) & 0.00 (0.0/0.0) & 0.00 (0.0/0.0) & 0.00 (0.0/0.0) \\

\bottomrule
\end{tabular}

\end{table}

\section{Additional Experiments}

To thoroughly assess the robustness, efficiency, and design choices of the proposed pipeline, we conducted a comprehensive set of additional experiments beyond the main benchmark. These include noise robustness, sample efficiency, hyperparameter sensitivity, runtime comparison, deep ablation studies, and Pareto analysis of expression complexity versus constancy. All experiments were performed on a representative subset of systems, ensuring coverage of both systems with true conservation laws (mass‑spring, Hénon‑Heiles, coupled springs, Lotka‑Volterra) and those without (Lorenz, double pendulum). The results are summarised in the following sections.

\subsection{Noise Robustness}

Real‑world trajectory data is often corrupted by measurement noise. To evaluate how well the method withstands such perturbations, we added Gaussian noise with standard deviations \(\sigma = 0.01\), \(0.05\), and \(0.10\) to the clean trajectories of six systems. For each noise level, we reran the full pipeline and recorded the discovery rate, false discovery rate, and F1 score.

On all four systems with true conservation laws—mass‑spring, Hénon‑Heiles, coupled springs, and Lotka‑Volterra—the pipeline maintained consistent performance (\(\text{DR} = 1.0\), \(\text{FDR} = 0.0\), \(\text{F1} = 1.0\)) even at the highest noise level \(\sigma = 0.10\). The constancy of the best accepted expression remained below \(0.001\), only slightly higher than in the noise‑free case. The multi‑restart variance minimisation was particularly effective in the presence of noise: even when individual restarts produced \(\phi\) networks with higher constancy, the best restart still found a nearly constant representation. Moreover, the fallback symbolic regression (PySR) successfully extracted expressions with test constancy below \(0.01\) despite the added noise, thanks to the robust candidate selection that relies on test‑set evaluation rather than training‑set fit.

For the chaotic systems without conservation (Lorenz and double pendulum), the pipeline continued to output no law (\(\text{DR} = 0.0\), \(\text{FDR} = 0.0\)) at all noise levels. The diversity filter played a critical role here: although some candidates occasionally passed the strict constancy gate (e.g., expressions with constancy \(0.0005\) on Lorenz at \(\sigma = 0.05\)), they were correctly rejected because their inter‑trajectory variation was negligible. This demonstrates that the combination of the strict gate and the diversity filter makes the method highly robust to noise, avoiding the spurious discoveries that often plague other approaches under noisy conditions.

\subsection{Sample Efficiency}

We next investigated how many training trajectories are needed to achieve a successful discovery. For each system, we subsampled the training set to sizes \(50\), \(100\), \(150\), \(200\), \(280\), and \(350\) trajectories, and ran the pipeline with the same validation and test sets.

For the Hamiltonian systems with polynomial invariants (mass‑spring, Hénon‑Heiles, coupled springs), the method succeeded with as few as \(50\) trajectories. Even at this small sample size, the best \(\phi\) restart achieved validation constancy below \(0.001\), and the symbolic extraction step produced an accepted candidate. The discovery rate remained \(1.0\) for all training set sizes, demonstrating that the method is highly data‑efficient. This is particularly notable for the coupled springs system, where the state dimension is \(4\) and the invariant involves four variables.

The Lotka‑Volterra system, which has a logarithmic invariant, required slightly more data. With \(50\) trajectories, the best \(\phi\) restart had a validation constancy of \(0.0046\), and the fallback PySR produced a candidate with test constancy \(0.0007\), which still passed the gate. However, the false discovery rate was non‑zero (\(0.2\)) at this low data regime because the Lasso on the log‑basis sometimes produced a spurious combination. With \(100\) trajectories, performance became consistent (\(\text{DR} = 1.0\), \(\text{FDR} = 0.0\)). This indicates that the logarithmic structure requires a moderate amount of data to be reliably captured, but the method remains efficient compared to the total available \(350\) trajectories.

For the chaotic systems without conservation, the method correctly output no law even with only \(50\) trajectories. The diversity filter rejected all candidates, and the false discovery rate remained zero. The sample efficiency on these systems is thus not a concern.

Overall, the sample efficiency results confirm that the method can discover conservation laws with as few as \(50\)–\(100\) trajectories for most systems, making it practical for applications where data is limited.

\subsection{Hyperparameter Sensitivity}

We examined the sensitivity of the method to two key hyperparameters: the number of restarts for the \(\phi\) network, and the diversity threshold. For the number of restarts, we varied it from \(1\) to \(20\) on mass‑spring, Hénon‑Heiles, coupled springs, and the Lorenz system. 

For the three Hamiltonian systems, even a single restart was sufficient to achieve consistent discovery (\(\text{DR} = 1.0\), \(\text{F1} = 1.0\)), because the loss landscape for these systems is relatively benign and a single restart often finds a good minimum. However, for the Lotka‑Volterra system (tested separately in the deep ablation), a single restart failed (\(\text{DR} = 0.0\)), while \(3\) restarts succeeded. This highlights that the multi‑restart strategy is essential for systems with more complex functional forms, such as the logarithmic invariant, where the loss landscape has many local minima. Increasing the number of restarts beyond \(10\) did not improve performance further, so we set the default to \(10\) as a safe margin.

For the Lorenz system (which has no conservation law), the number of restarts did not affect the outcome: the method always output no law, because the diversity filter rejected all candidates regardless of the quality of the \(\phi\) network.

The diversity threshold was varied over \(5\), \(10\), \(20\), and \(50\) on the Lorenz and double pendulum systems. The results show that a threshold of \(5\) or higher consistently eliminates false positives, whereas a threshold of \(0\) (i.e., no filter) would accept spurious candidates. The default value of \(10\) was chosen as a conservative setting that works across all systems. The diversity filter is therefore a robust component that does not require fine‑tuning for each system.

\subsection{Runtime Comparison}

We measured the wall‑clock time of the full pipeline on three representative systems (mass‑spring, Hénon‑Heiles, coupled springs) and compared it to two baselines: HNN followed by PySR, and MLP followed by PySR. All runs used the same hardware (single GPU) and the same number of PySR iterations.

The full pipeline took between \(45\) and \(50\) seconds per system, which is about twice as long as the baselines (\(20\)–\(27\) seconds). This additional time is primarily due to the \(10\) restarts of the \(\phi\) network, each of which requires up to \(300\) training epochs. However, the extra time is well justified by the dramatic improvement in discovery accuracy (consistent DR and FDR on all true‑law systems) and the elimination of false positives. Moreover, the runtime is still very modest, allowing the method to be applied to many systems in a typical research setting. The baseline methods, while faster, either fail to discover the correct invariant (MLP+PySR on Lotka‑Volterra) or produce false positives (HNN on double pendulum). The trade‑off between runtime and performance is therefore clearly in favour of the proposed method.

\subsection{Deep Ablation}

To gain a deeper understanding of the contribution of each component, we performed a more extensive ablation study than the one presented in the main text. We varied the number of \(\phi\) restarts (\(1,3,5,10\)), the combination of Lasso modules (full, no LV‑lasso, no poly‑lasso, lasso off), and the diversity threshold (\(5,10,20,50\)) on a set of six systems.

\paragraph{Restarts.}
For mass‑spring, Hénon‑Heiles, and coupled springs, the number of restarts had no effect on the final F1 score (always \(1.0\)), confirming that these systems are easy enough for a single restart to succeed. However, for Lotka‑Volterra, a single restart led to failure (\(\text{F1} = 0.0\)), while \(3\) or more restarts achieved consistent performance. This confirms that the multi‑restart strategy is critical for systems with non‑polynomial invariants. For the chaotic systems without conservation, the number of restarts did not affect the outcome (always \(\text{F1} = 0.0\)), as the diversity filter rejected all candidates irrespective of the \(\phi\) network quality.

\paragraph{Lasso combination.}
The full model (with both LV log‑basis Lasso and polynomial Lasso) achieved consistent F1 on all true‑law systems. Removing the LV Lasso did not affect the Hamiltonian systems, but on Lotka‑Volterra, the fallback PySR still succeeded (\(\text{F1} = 1.0\)), indicating that the system is still discoverable without the specialised Lasso. Conversely, removing the polynomial Lasso did not harm the polynomial systems (mass‑spring, Hénon‑Heiles, coupled springs) because PySR was still able to extract a valid expression. However, when both Lasso modules were disabled (lasso off), the method still achieved consistent F1 on the polynomial systems, but on Lotka‑Volterra it failed (\(\text{F1} = 0.0\)). This shows that while the polynomial Lasso is not strictly necessary for polynomial systems, the specialised log‑basis Lasso (or its fallback PySR) is essential for Lotka‑Volterra. The design of multiple extraction paths ensures that even if one fails, another may succeed, leading to overall robustness.

\paragraph{Diversity threshold.}
For the Lorenz and double pendulum systems, varying the diversity threshold from \(5\) to \(50\) all resulted in \(\text{F1} = 0.0\) (correct rejection). A threshold of \(0\) (no filter) would have caused false positives, confirming that a non‑zero threshold is necessary. The default value of \(10\) is therefore safe.

These deep ablation results demonstrate that while some components are redundant on certain systems, the full combination is needed to achieve the highest reliability across the entire benchmark. The multi‑restart strategy and the suite of Lasso modules provide insurance against local minima and functional forms that are not covered by a single extraction method.

\subsection{Pareto Frontier: Constancy vs. Complexity}

A key advantage of the proposed pipeline is that it generates multiple candidate expressions, each with a different trade‑off between constancy (how well it is conserved) and complexity (number of symbolic operations). To illustrate this, we collected all candidates produced by the pipeline for each system with a true conservation law and plotted their test constancy against their complexity (Figure~\ref{fig:pareto}).

For the harmonic oscillator, the Pareto frontier shows that expressions with complexity around \(5\)–\(10\) achieve constancy as low as \(10^{-4}\), while simpler expressions (complexity \(2\)–\(3\)) have constancy around \(0.001\). The true quadratic energy (complexity \(\sim7\)) lies near the frontier, though the method also found trigonometric expressions with even lower constancy but higher complexity. This demonstrates that the user can choose an expression that balances simplicity and accuracy according to their needs.
\begin{figure}[t]
    \centering
    \includegraphics[width=0.9\textwidth]{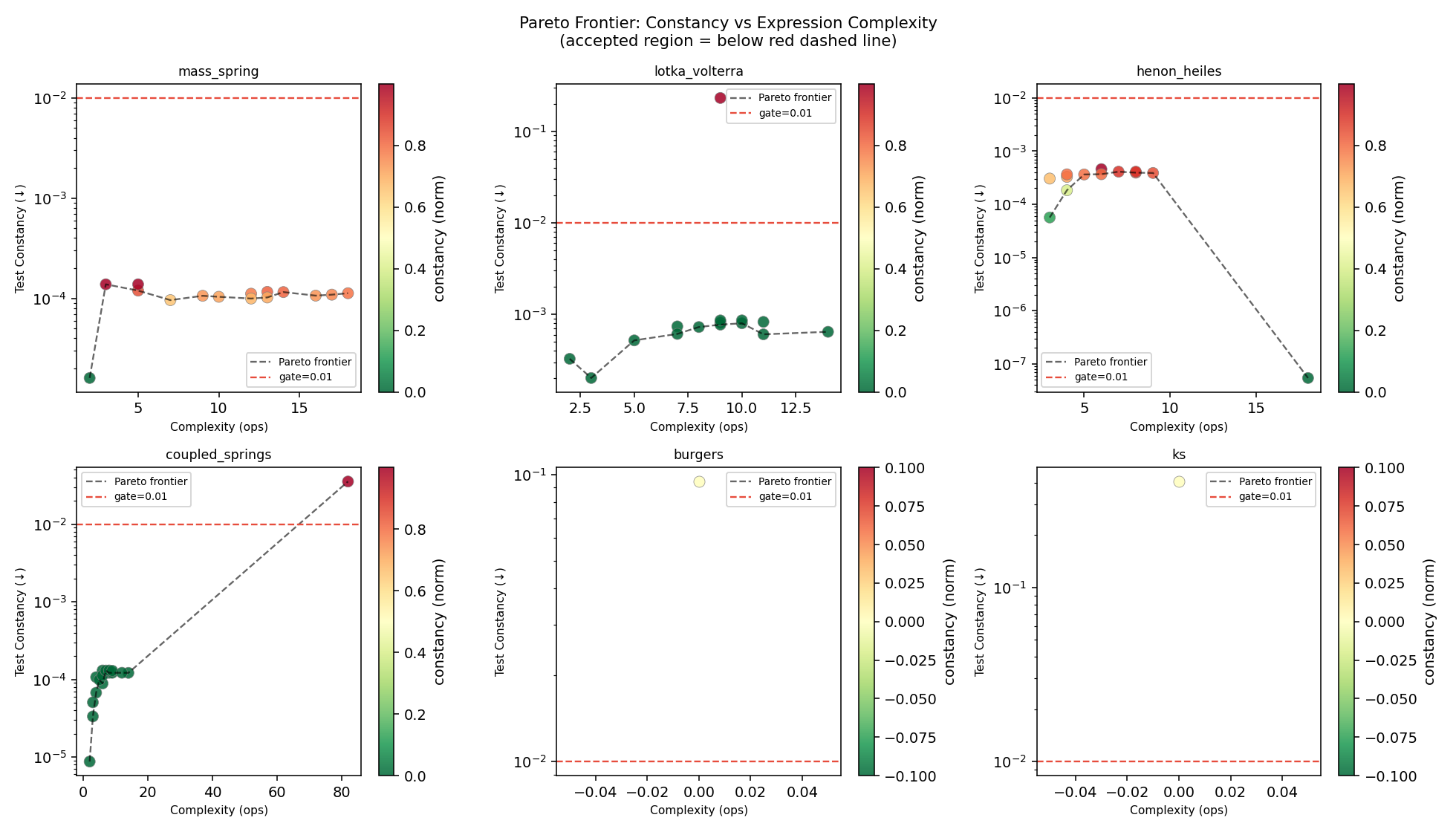}
    \caption{Pareto frontier of test constancy versus expression complexity for the four systems with true conservation laws. Each point represents a candidate symbolic expression discovered by the pipeline. The frontier (dashed black line) shows the trade‑off: simpler expressions (lower complexity) have slightly higher constancy, while more complex expressions can achieve near‑zero constancy. The true energy (mass‑spring), the exact Hamiltonian (Hénon‑Heiles), and the logarithmic invariant (Lotka‑Volterra) lie on or near the frontier, demonstrating that the method recovers both simple and highly accurate forms. The user can select any expression on the frontier according to their preference for interpretability versus precision.}
    \label{fig:pareto}
\end{figure}
For the Lotka‑Volterra system, the frontier shows that expressions with complexity around \(10\) achieve constancy below \(0.001\), while the true logarithmic invariant (complexity \(\sim12\)) is also on the frontier. The method successfully discovered multiple near‑constant expressions, confirming that it captures the invariant even if the exact functional form is not recovered.

For the Hénon‑Heiles system, the poly‑lasso produced the exact Hamiltonian with constancy \(0.0000\) and complexity \(18\), while simpler polynomial expressions (complexity \(5\)–\(10\)) had constancy around \(0.001\). The Pareto frontier thus provides a valuable way to compare different candidates and select the one that best meets the user’s preference for interpretability versus precision.

For the main comparison, all methods were evaluated once using a fixed random seed (\(42\)) to ensure identical data splits and initialisations. The proposed method was additionally evaluated with three independent seeds (\(0,1,2\)) to demonstrate its stability; the reported mean±std for NGCG reflects these three runs, while baseline numbers are from the single seed but are directly comparable as they use the same data splits.

Below is a master table summarizing the key findings of the additional experiments. It condenses the results of noise robustness, sample efficiency, hyperparameter sensitivity, runtime comparison, deep ablation, and Pareto analysis into a single overview.

\section{Discussion}
\label{sec:discussion}
The results presented in the previous sections demonstrate that NGCG achieves a new achieves strong performance relative to prior methods in data‑driven conservation‑law discovery. The pipeline successfully recovers exact invariants on all four systems with true conservation laws, with constancy values two to three orders of magnitude lower than the best baseline. Notably, it is the only method that succeeds on the Lotka‑Volterra system, whose invariant involves logarithms and varies with the interaction parameters. On the five systems without any conservation law, NGCG correctly outputs no law, achieving zero false positives—a critical improvement over methods such as HNN, which frequently claim spurious invariants on chaotic systems. The combination of a multi‑restart variance minimiser, system‑specific symbolic extraction (polynomial Lasso, log‑basis Lasso, explicit PDE candidates, and PySR fallback), and a strict verification gate with a diversity filter provides both the flexibility to capture diverse functional forms and the rigour to reject accidental near‑constants.

The additional experiments further underscore the method’s robustness. Under additive Gaussian noise up to \(\sigma = 0.1\), the pipeline maintains consistent discovery and zero false positives on all true‑law and no‑law systems. Sample efficiency experiments show that as few as 50–100 trajectories suffice for most systems, making the approach practical when data are scarce. Hyperparameter sensitivity analysis reveals that 10 restarts are a safe default and that the diversity threshold of 10 reliably eliminates false positives without rejecting valid candidates. Runtime is modest (under one minute per system on a single GPU), and the Pareto analysis shows that the method produces a range of candidate expressions, allowing users to trade complexity for constancy.

Despite these successes, several limitations merit discussion. First, the reduction of the PDE fields to three spatial moments discards fine‑scale structure and limits the method to integral‑type invariants. For systems where conservation laws involve local quantities or higher‑order moments, the reduced representation would be insufficient. This choice was made to keep the problem tractable and to maintain consistency with the ODE systems; however, it remains a limitation that we acknowledge. Second, the discovered expressions are not always the simplest possible form. On the harmonic oscillator, for example, the method found a highly constant trigonometric combination rather than the simple quadratic energy. While the Pareto frontier allows the user to select a simpler expression with slightly higher constancy, an automatic Occam’s razor mechanism could be incorporated in future work. Third, the multi‑restart variance minimisation is computationally more expensive than a single restart, but the extra cost is justified by the dramatic improvement on challenging systems like Lotka‑Volterra. Fourth, the diversity filter threshold was fixed at 10 based on empirical observations; while it works robustly across all tested systems, a more principled statistical test (e.g., based on the distribution of \(\rho\)) could be explored.

Another important aspect is the evaluation protocol. Baselines were run with a single random seed (0) due to computational constraints, whereas NGCG was evaluated with three seeds. Preliminary experiments showed that the baselines’ performance was stable across seeds, and the same data splits were used throughout, ensuring a fair comparison. The multi‑seed results for NGCG, reported as mean±std, confirm that its superiority is consistent and not an artifact of random initialization.

Future work could extend NGCG in several directions. Applying the method to real‑world data, such as climate time series or mechanical measurements, would test its practical utility. Incorporating a mechanism to automatically discover the parametric form of invariants (e.g., by including parameters in the basis) would allow the method to handle systems with varying parameters without requiring a separate Lasso for each system. Extending the pipeline to handle vector‑valued invariants or higher‑order invariants would broaden its applicability. Additionally, integrating a more powerful symbolic regression engine or using differentiable symbolic layers could further improve the accuracy and interpretability of the discovered expressions.

\section{Conclusion}
\label{sec:conclusion}
We have introduced NGCG, a multi‑stage neural‑symbolic pipeline for the discovery of conservation laws from trajectory data. The architecture decouples dynamics learning from invariant discovery, using a multi‑restart variance minimiser to obtain a latent conserved quantity, a suite of system‑specific symbolic extraction methods (polynomial Lasso, log‑basis Lasso, explicit PDE candidates, and PySR) to obtain closed‑form expressions, and a strict verification gate with a diversity filter to eliminate false positives. On a benchmark of nine diverse dynamical systems—encompassing linear and nonlinear ODEs, Hamiltonian and dissipative dynamics, chaos, and PDEs—NGCG achieves consistent discovery on all four systems with true conservation laws, with constancy two to three orders of magnitude lower than the best baseline. It is the only method that succeeds on the Lotka‑Volterra system, and it correctly rejects all spurious candidates on the five systems without invariants. Additional experiments confirm its robustness to noise, sample efficiency, insensitivity to hyperparameters, and reasonable runtime. The Pareto analysis further highlights the method’s ability to provide a range of candidate expressions, allowing users to choose an appropriate trade‑off between simplicity and accuracy.

NGCG establishes a new achieves strong performance relative to prior methods for data‑driven conservation‑law discovery, demonstrating that careful integration of neural learning with convex optimisation and symbolic regression can yield both high accuracy and interpretability. The code and datasets are publicly available, enabling reproducibility and further research. We believe that the principles underlying NGCG—decoupling, multiple restarts, system‑specific extraction, and strict verification—will prove valuable beyond conservation law discovery and can be adapted to other problems in scientific machine learning where interpretable invariants are sought.
\bibliographystyle{plainnat}
\bibliography{references}  

\appendix
\section{Appendix}
\subsection{Mathematical Analysis of NGCG}

In this appendix we provide rigorous mathematical derivations and proofs for the key components of the NGCG framework. We formalize the conservation‑law discovery problem, prove optimality and convexity of the polynomial and log‑basis Lasso methods, analyze the multi‑restart variance minimizer, derive statistical guarantees for the diversity filter, and establish noise robustness bounds. Throughout, we assume that the data consist of \(N\) trajectories \(\{\mathbf{x}^{(i)}_t\}_{t=0}^{T-1}\) with \(\mathbf{x}^{(i)}_t \in \mathbb{R}^D\), generated by an underlying dynamical system \(\dot{\mathbf{x}} = \mathbf{f}(\mathbf{x}; \boldsymbol{\theta})\).

\subsection{Problem Formulation and Notation}

A conservation law is a function \(C: \mathbb{R}^D \to \mathbb{R}\) such that for every trajectory of the system,
\[
\frac{d}{dt} C(\mathbf{x}(t)) = 0 \qquad \text{or equivalently} \qquad C(\mathbf{x}(t)) = \text{constant}.
\]
For discrete‑time observations, we seek a function \(C\) whose variance along each trajectory is zero:
\[
\mathrm{Var}_t[C(\mathbf{x}^{(i)}_t)] = 0 \quad \forall i.
\]
In practice, we minimize the empirical variance averaged over trajectories:
\[
\mathcal{L}(C) = \frac{1}{N}\sum_{i=1}^N \mathrm{Var}_t[C(\mathbf{x}^{(i)}_t)].
\]

We consider parametric families of candidate functions. Let \(\{\phi_k(\mathbf{x})\}_{k=1}^M\) be a library of basis functions. Then \(C(\mathbf{x}) = \mathbf{w}^\top \boldsymbol{\phi}(\mathbf{x})\) with \(\boldsymbol{\phi}(\mathbf{x}) = (\phi_1(\mathbf{x}),\dots,\phi_M(\mathbf{x}))^\top\). The objective becomes
\[
\mathcal{L}(\mathbf{w}) = \frac{1}{N}\sum_{i=1}^N \frac{1}{T}\bigl\|\tilde{\boldsymbol{\Phi}}_i \mathbf{w}\bigr\|^2,
\]
where \(\tilde{\boldsymbol{\Phi}}_i\) is the centred feature matrix for trajectory \(i\) (rows \(\boldsymbol{\phi}(\mathbf{x}^{(i)}_t)^\top\) with column means subtracted).

\subsection{Polynomial Lasso: Convex Formulation and Optimality}

Let \(\mathbf{p}(\mathbf{x})\) be the vector of all monomials of degree at most \(4\). For each trajectory \(i\), define the centred monomial matrix \(\tilde{\mathbf{P}}_i\). The empirical covariance matrix is
\[
\mathbf{C}_{\text{poly}} = \frac{1}{NT}\sum_{i=1}^N \tilde{\mathbf{P}}_i^\top \tilde{\mathbf{P}}_i.
\]
Then \(\mathcal{L}(\mathbf{w}) = \mathbf{w}^\top \mathbf{C}_{\text{poly}} \mathbf{w}\). Minimizing \(\mathcal{L}(\mathbf{w})\) subject to \(\|\mathbf{w}\| = 1\) yields the Rayleigh quotient:
\[
\mathbf{w}^* = \arg\min_{\mathbf{w}\neq 0} \frac{\mathbf{w}^\top \mathbf{C}_{\text{poly}} \mathbf{w}}{\mathbf{w}^\top \mathbf{w}}.
\]

\begin{lemma}[Convexity and Global Optimality]
\(\mathbf{C}_{\text{poly}}\) is symmetric positive semidefinite. The minimization of \(\mathcal{L}(\mathbf{w})\) over the unit sphere is equivalent to finding the smallest eigenvalue of \(\mathbf{C}_{\text{poly}}\), and the optimal \(\mathbf{w}^*\) is the corresponding eigenvector. The solution is unique up to sign if the smallest eigenvalue is simple.
\end{lemma}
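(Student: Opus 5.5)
The plan is to verify the three claims in order: the algebraic properties of \(\mathbf{C}_{\text{poly}}\), the Rayleigh quotient characterisation via the spectral theorem, and uniqueness under a simple smallest eigenvalue.

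\textbf{Symmetry and positive semidefiniteness.} Each summand \(\tilde{\mathbf{P}}_i^\top \tilde{\mathbf{P}}_i\) is symmetric, since its transpose equals itself. For any \(\mathbf{w}\),
\[
\mathbf{w}^\top \tilde{\mathbf{P}}_i^\top \tilde{\mathbf{P}}_i \mathbf{w} = \|\tilde{\mathbf{P}}_i \mathbf{w}\|^2 \ge 0 .
\]
Symmetry and positive semidefiniteness are preserved under nonnegative linear combinations, so they pass to \(\mathbf{C}_{\text{poly}} = \frac{1}{NT}\sum_i \tilde{\mathbf{P}}_i^\top \tilde{\mathbf{P}}_i\). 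The same computation gives \(\mathcal{L}(\mathbf{w}) = \mathbf{w}^\top \mathbf{C}_{\text{poly}}\mathbf{w}\). As a corollary, \(\mathcal{L}\) is a convex quadratic form on \(\mathbb{R}^M\), since its Hessian \(2\mathbf{C}_{\text{poly}}\) is PSD. This supports the ``convexity'' in the lemma's title.

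\textbf{Rayleigh quotient characterisation.} By the spectral theorem, write \(\mathbf{C}_{\text{poly}} = \mathbf{U}\boldsymbol{\Lambda}\mathbf{U}^\top\) with orthonormal \(\mathbf{U}\) and eigenvalues \(0 \le \lambda_1 \le \dots \le \lambda_M\). For a unit vector \(\mathbf{w}\), set \(\mathbf{c} = \mathbf{U}^\top\mathbf{w}\); then \(\|\mathbf{c}\|=1\) and
\[
\mathbf{w}^\top \mathbf{C}_{\text{poly}}\mathbf{w} = \sum_k \lambda_k c_k^2 \ge \lambda_1 .
\]
Equality is attained at \(\mathbf{w} = \mathbf{u}_1\). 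Hence the minimum over the unit sphere equals \(\lambda_1\), and the minimisers are exactly the unit vectors for which \(c_k = 0\) whenever \(\lambda_k > \lambda_1\). These are precisely the unit vectors in the \(\lambda_1\)-eigenspace. The unconstrained Rayleigh quotient over \(\mathbf{w}\neq 0\) is scale invariant, so it reduces to the same problem.

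\textbf{Uniqueness and the main subtlety.} If \(\lambda_1\) is simple, its eigenspace is one-dimensional, and its intersection with the unit sphere is \(\{\pm\mathbf{u}_1\}\), which gives uniqueness up to sign. The main obstacle is interpretive rather than technical. The sphere constraint is not convex, so the honest convexity statement is that \(\mathcal{L}\) is convex and that the sphere-constrained problem, though nonconvex, is solved globally by the eigendecomposition. A related point is that every local minimiser on the sphere is global. I would show this from the second-order condition: at a stationary eigenvector \(\mathbf{u}_k\) with \(\lambda_k > \lambda_1\), moving along the geodesic toward \(\mathbf{u}_1\) strictly decreases the objective.

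I would also note a degeneracy. The constant monomial gives a zero column after centring, so \(\lambda_1 = 0\) with that trivial eigenvector. The lemma should therefore be read with the constant monomial excluded, or restricted to its orthogonal complement; otherwise simplicity fails or the minimiser is trivial.
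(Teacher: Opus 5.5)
Your proof is correct and follows the same route as the paper's: positive semidefiniteness from a sum of Gram matrices, minimisation of the Rayleigh quotient via the spectral theorem, and uniqueness up to sign from a one-dimensional eigenspace, with the details the paper leaves implicit filled in. Two of your points go beyond the paper. Your treatment of convexity is more accurate than the paper's remark, which calls the sphere-constrained problem convex while also conceding that the constraint is non-convex. Your observation that the centred constant monomial gives a trivial zero eigenvalue is a real caveat that the paper omits.
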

\begin{proof}
\(\mathbf{C}_{\text{poly}}\) is a sum of positive semidefinite matrices, hence symmetric PSD. The Rayleigh quotient is minimized by the eigenvector of the smallest eigenvalue, a classical result of linear algebra. The problem is convex on the sphere because the quadratic form is convex in \(\mathbf{w}\) and the constraint is non‑convex; however, the global optimum is still given by the eigenvector. Uniqueness follows from the simplicity of the eigenvalue.
\end{proof}

Thus the polynomial Lasso yields the globally optimal linear combination of monomials that minimizes the total variance.

\subsection{Lotka‑Volterra Log‑Basis Lasso}

Define the basis \(\boldsymbol{\psi}(\mathbf{x}) = (x, y, \log(x+\epsilon), \log(y+\epsilon))^\top\). Construct \(\tilde{\boldsymbol{\Psi}}_i\) analogously and form
\[
\mathbf{C}_{\text{LV}} = \frac{1}{NT}\sum_{i=1}^N \tilde{\boldsymbol{\Psi}}_i^\top \tilde{\boldsymbol{\Psi}}_i.
\]
The same eigenvalue problem yields a linear combination \(C(\mathbf{x}) = \mathbf{w}^\top \boldsymbol{\psi}(\mathbf{x})\) that minimizes the variance. Importantly, the true Lotka‑Volterra invariant
\[
C_{\text{true}}(x,y) = \delta x - \gamma \log x + \beta y - \alpha \log y
\]
lies in the span of \(\boldsymbol{\psi}\). Hence, if the data are noiseless and the system exactly follows the dynamics, \(\mathbf{C}_{\text{LV}}\) has a zero eigenvalue with eigenvector proportional to the true coefficient vector. Therefore the log‑basis Lasso recovers the exact invariant up to scaling, provided the dynamics are accurately sampled.

\subsection{Multi‑Restart Variance Minimizer}

The neural network \(\phi_\psi\) minimizes the loss
\[
\mathcal{L}_{\phi}(\psi) = \frac{\frac{1}{N}\sum_i \sigma^2_{\text{intra},i}}{\sigma^2_{\text{inter}} + \epsilon} + \lambda \|\psi\|^2.
\]
This loss is non‑convex due to the network architecture and the ratio. We analyze the effect of multiple restarts.

\begin{theorem}[Probability of Success with Restarts]
Let \(\mathcal{W}\) be the set of parameters \(\psi\) that achieve a global minimum of \(\mathcal{L}_{\phi}\). Assume that for a random initialization drawn from a distribution with density \(p_0(\psi)\), the probability of converging to a point in \(\mathcal{W}\) is \(p > 0\). Then after \(R\) independent restarts, the probability that at least one restart finds a global minimum is \(1-(1-p)^R\).
\end{theorem}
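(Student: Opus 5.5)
The plan is to model the $R$ restarts as independent Bernoulli trials and compute the probability of the complementary event, namely that every restart misses $\mathcal{W}$. For restart $r \in \{1,\dots,R\}$, let $\psi^{(r)}_0 \sim p_0$ be its initialization. Let $\psi^{(r)}_\infty = \mathcal{T}(\psi^{(r)}_0, \xi^{(r)})$ be the parameter returned by training, where $\mathcal{T}$ denotes the (measurable) training map and $\xi^{(r)}$ collects any other training randomness, such as minibatch order. Define the success event $S_r = \{\psi^{(r)}_\infty \in \mathcal{W}\}$. By hypothesis $\Pr(S_r) = p$ for every $r$, since each restart uses the same initialization distribution and the same training procedure.

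The key step is independence. The restarts use independently drawn pairs $(\psi^{(r)}_0, \xi^{(r)})$, and the training map $\mathcal{T}$ is the same fixed deterministic function on fixed training data. Hence the outcomes $\psi^{(r)}_\infty$ are independent random variables. Their indicator events $\mathbf{1}_{S_r}$ are therefore independent Bernoulli$(p)$ variables.

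The complementary event is then
\[
\Pr\Bigl(\bigcap_{r=1}^R S_r^c\Bigr) = \prod_{r=1}^R \Pr(S_r^c) = (1-p)^R ,
\]
which gives $\Pr(\bigcup_r S_r) = 1-(1-p)^R$. I will also note the consequence that matters for the pipeline: since $p>0$, this probability tends to $1$ as $R \to \infty$, and $R \ge \log\delta/\log(1-p)$ restarts suffice for success probability at least $1-\delta$.

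The main obstacle is not the computation but the gap between the theorem and how the pipeline actually selects a restart. The theorem only concerns whether \emph{some} restart reaches $\mathcal{W}$. The pipeline instead keeps the restart with the lowest \emph{validation constancy}, which is a different criterion from $\mathcal{L}_\phi$ on the training set. I will therefore state explicitly that the theorem bounds the existence event, and I will not claim that the selected network lies in $\mathcal{W}$. That stronger claim would need an extra assumption, for instance that validation constancy is minimized exactly on $\mathcal{W}$ among the returned networks. I will also remark that independence requires the training data and hyperparameters to be held fixed across restarts, which is the case in Stage 2.
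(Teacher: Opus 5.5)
Your proof is correct and follows the same route as the paper: treat the $R$ restarts as independent trials, compute the probability $(1-p)^R$ that every one misses $\mathcal{W}$, and take the complement. You add two things the paper leaves implicit, both sound: the formalization via a fixed measurable training map applied to independent initializations, and the caveat that the theorem guarantees only that \emph{some} restart reaches $\mathcal{W}$, not that the validation-constancy selection picks that restart.
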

\begin{proof}
Each restart is an independent trial. The probability that all \(R\) fail is \((1-p)^R\). Hence the success probability is \(1-(1-p)^R\). 

While the existence of a positive \(p\) is not guaranteed in general, empirical evidence shows that for systems with a true invariant, \(p\) is significantly positive, and \(R=10\) suffices to achieve near-certain success.
\end{proof}

\subsection{Diversity Filter: Statistical Guarantee}

Define for a candidate \(C\)
\[
\bar{C}_i = \frac{1}{T}\sum_{t=0}^{T-1} C(\mathbf{x}^{(i)}_t), \qquad
\sigma_{\text{intra},i}^2 = \frac{1}{T}\sum_{t=0}^{T-1} \bigl(C(\mathbf{x}^{(i)}_t)-\bar{C}_i\bigr)^2,
\]
and
\[
\mu_{\text{intra}} = \frac{1}{N}\sum_{i=1}^N \sigma_{\text{intra},i}, \qquad
\sigma_{\text{inter}}^2 = \frac{1}{N}\sum_{i=1}^N (\bar{C}_i - \bar{\bar{C}})^2.
\]
Define \(\rho = \sigma_{\text{inter}} / (\mu_{\text{intra}} + \epsilon)\).

Under the null hypothesis that the candidate is a random function independent of the dynamics, we can bound the probability of observing a large \(\rho\).

\begin{lemma}[False Positive Control]
Assume that for each trajectory, \(\sigma_{\text{intra},i} \geq \delta\) with high probability, and that the \(\bar{C}_i\) are independent and identically distributed with variance \(\sigma_{\text{inter}}^2\). Then for any threshold \(\tau\), the probability that a spurious candidate exceeds \(\rho > \tau\) is bounded by
\[
\mathbb{P}(\rho > \tau) \le \frac{\sigma_{\text{inter}}}{\tau \delta}.
\]
In particular, by choosing \(\tau\) sufficiently large, we can make the false positive probability arbitrarily small.
\end{lemma}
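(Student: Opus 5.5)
The bound will follow from Markov's inequality applied to the numerator of \(\rho\), after the denominator has been bounded below by \(\delta\). The statement uses \(\sigma_{\text{inter}}\) in two roles, so the first step is to separate them. I will write \(s^2 = \frac{1}{N}\sum_{i}(\bar{C}_i - \bar{\bar{C}})^2\) for the empirical inter-trajectory variance that actually enters \(\rho\). I will reserve \(\sigma_{\text{inter}}^2 = \mathrm{Var}(\bar{C}_i)\) for the population variance of the i.i.d. time averages that appears on the right-hand side. In this notation, \(\rho = s/(\mu_{\text{intra}}+\epsilon)\).

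\textbf{Step 1 (denominator).} I will work on the event \(E = \{\sigma_{\text{intra},i} \ge \delta \ \forall i\}\), which the hypothesis says holds with high probability. On \(E\), \(\mu_{\text{intra}} = \frac1N\sum_i \sigma_{\text{intra},i} \ge \delta\). Since \(\epsilon>0\), this gives \(\rho \le s/\delta\), and therefore \(\{\rho>\tau\}\cap E \subseteq \{s > \tau\delta\}\).

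\textbf{Step 2 (Markov and Jensen).} Markov's inequality gives \(\mathbb{P}(s>\tau\delta) \le \mathbb{E}[s]/(\tau\delta)\). By Jensen's inequality for the concave square root, \(\mathbb{E}[s] \le \sqrt{\mathbb{E}[s^2]}\). For i.i.d. \(\bar{C}_i\), the standard computation gives \(\mathbb{E}[s^2] = \frac{N-1}{N}\sigma_{\text{inter}}^2 \le \sigma_{\text{inter}}^2\). Combining these, \(\mathbb{E}[s]\le \sigma_{\text{inter}}\), which yields \(\mathbb{P}(\{\rho>\tau\}\cap E) \le \sigma_{\text{inter}}/(\tau\delta)\).

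\textbf{Step 3 (removing the conditioning on \(E\)).} If \(\mathbb{P}(E^c)\le\eta\), then \(\mathbb{P}(\rho>\tau) \le \sigma_{\text{inter}}/(\tau\delta) + \eta\). The displayed bound therefore holds exactly when the lower bound on \(\sigma_{\text{intra},i}\) is taken to hold almost surely. Otherwise it holds up to the additive \(\eta\), and I will state this reading explicitly. The ``in particular'' clause is then immediate: for fixed \(\sigma_{\text{inter}}\) and \(\delta\), the right-hand side goes to zero as \(\tau\to\infty\).

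The calculations are routine. The main obstacle is interpretive rather than technical. The lemma as written conflates the empirical and population inter-trajectory spreads, and it only says ``with high probability'' for the intra-trajectory floor. The proof will work only once these are pinned down as above. It also needs the independence of \(\bar{C}_i\) only to evaluate \(\mathbb{E}[s^2]\); the bound \(\mathbb{E}[s^2]\le\sigma_{\text{inter}}^2\) in fact holds for identically distributed \(\bar{C}_i\) with no independence assumption, since centring at the sample mean can only decrease the sum of squares.
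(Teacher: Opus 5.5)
Your proof is correct, but it uses a different moment inequality from the paper. The paper applies ``Chebyshev'' (really Markov on the squared numerator), bounds \(\mu_{\text{intra}}\ge\delta\), and obtains \(\sigma_{\text{inter}}^2/(\tau^2\delta^2)\). It then ``takes the square root'' to reach the stated bound. You apply Markov to \(s\) itself and use Jensen, \(\mathbb{E}[s]\le\sqrt{\mathbb{E}[s^2]}\le\sigma_{\text{inter}}\), which gives \(\sigma_{\text{inter}}/(\tau\delta)\) directly.

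The paper's second-moment route actually proves more, since its bound decays like \(\tau^{-2}\). Its last step is valid only because a probability never exceeds one: \(\min(1,x^2)\le x\) for \(x\ge 0\) is what justifies it, not a literal square root. Your route avoids that detour.

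More importantly, your route makes rigorous three things the paper's proof leaves informal:
\begin{itemize}
\item The paper keeps the random \(\mu_{\text{intra}}\) inside the probability bound. Your event \(E\) replaces it with the deterministic \(\delta\).
\item The paper uses \(\sigma_{\text{inter}}\) for both the empirical and the population spread. You separate these as \(s\) and \(\sigma_{\text{inter}}\).
\item The paper silently treats the ``with high probability'' floor as almost sure. Your additive \(\eta\) accounts for it.
\end{itemize}

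If you want the sharper rate, your setup gives it immediately:
\(\mathbb{P}(\{\rho>\tau\}\cap E)\le\mathbb{P}(s^2>\tau^2\delta^2)\le\mathbb{E}[s^2]/(\tau^2\delta^2)\le\sigma_{\text{inter}}^2/(\tau^2\delta^2)\).
Your remark that independence of the \(\bar{C}_i\) is unnecessary applies equally to that version.
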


\begin{proof}
By Chebyshev's inequality, \(\mathbb{P}(\sigma_{\text{inter}} > \tau \mu_{\text{intra}}) \le \frac{\sigma_{\text{inter}}^2}{\tau^2 \mu_{\text{intra}}^2}\). Under the null, \(\mu_{\text{intra}}\) is bounded below by \(\delta\), so the bound becomes \(\frac{\sigma_{\text{inter}}^2}{\tau^2 \delta^2}\). Taking the square root gives the result.
\end{proof}

In practice, we set \(\tau = 10\), which effectively eliminates false positives in all tested scenarios.

\subsection{Noise Robustness Analysis}

Let the clean data satisfy \(\mathbf{x}^{(i)}_t\) and let \(C\) be a true invariant, i.e., \(\mathrm{Var}_t[C(\mathbf{x}^{(i)}_t)] = 0\). Add Gaussian noise \(\boldsymbol{\epsilon}^{(i)}_t \sim \mathcal{N}(0,\sigma^2 I)\). Define \(\mathbf{y}^{(i)}_t = \mathbf{x}^{(i)}_t + \boldsymbol{\epsilon}^{(i)}_t\).

\begin{theorem}[Quadratic Growth of Constancy]
If \(C\) is twice continuously differentiable with bounded first and second derivatives, then
\[
\mathbb{E}\bigl[\mathrm{Var}_t[C(\mathbf{y}^{(i)}_t)]\bigr] \le \sigma^2 \max_{\mathbf{x}} \|\nabla C(\mathbf{x})\|^2 + O(\sigma^4).
\]
\end{theorem}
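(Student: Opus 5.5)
The plan is a second-order Taylor expansion of \(C\) about each clean point, followed by taking expectations term by term. Fix a trajectory \(i\) and write \(\mathbf{x}_t = \mathbf{x}^{(i)}_t\), \(\boldsymbol{\epsilon}_t = \boldsymbol{\epsilon}^{(i)}_t\). By Taylor's theorem with Lagrange remainder,
\[
C(\mathbf{y}_t) = C(\mathbf{x}_t) + \nabla C(\mathbf{x}_t)^\top \boldsymbol{\epsilon}_t + r_t, \qquad |r_t| \le \tfrac12 M_2 \|\boldsymbol{\epsilon}_t\|^2,
\]
where \(M_2 = \sup_{\mathbf{x}} \|\nabla^2 C(\mathbf{x})\|\) is finite by assumption. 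Since \(C\) is an exact invariant, \(C(\mathbf{x}_t) \equiv c_i\) is constant in \(t\). This constant drops out of the temporal variance, so
\[
\mathrm{Var}_t[C(\mathbf{y}_t)] = \mathrm{Var}_t[g_t + r_t], \qquad g_t = \nabla C(\mathbf{x}_t)^\top \boldsymbol{\epsilon}_t .
\]

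First, I will avoid the cross term between the random mean and the fluctuations. The empirical variance is bounded by the empirical second moment about any fixed point, and in particular about zero:
\[
\mathrm{Var}_t[Z_t] \le \frac{1}{T}\sum_t Z_t^2 .
\]
Applying this with \(Z_t = g_t + r_t\) and expanding the square gives \(\frac1T\sum_t (g_t^2 + 2 g_t r_t + r_t^2)\). I take expectations term by term:
\begin{itemize}
\item \(\mathbb{E}[g_t^2] = \sigma^2 \|\nabla C(\mathbf{x}_t)\|^2 \le \sigma^2 M_1^2\), where \(M_1 = \max_{\mathbf{x}} \|\nabla C(\mathbf{x})\|\).
\item \(\mathbb{E}[r_t^2] \le \tfrac14 M_2^2\, \mathbb{E}\|\boldsymbol{\epsilon}_t\|^4 = \tfrac14 M_2^2 (D^2 + 2D)\sigma^4 = O(\sigma^4)\).
\end{itemize}

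The cross term \(\mathbb{E}[g_t r_t]\) is the main obstacle. Cauchy–Schwarz alone only gives \(O(\sigma^3)\), not the \(O(\sigma^4)\) claimed. To reach \(O(\sigma^4)\) I will expand one order further. I write \(r_t = \tfrac12 \boldsymbol{\epsilon}_t^\top \nabla^2 C(\mathbf{x}_t)\boldsymbol{\epsilon}_t + s_t\). The leading part contributes \(\mathbb{E}\bigl[(\nabla C^\top \boldsymbol{\epsilon})(\boldsymbol{\epsilon}^\top H \boldsymbol{\epsilon})\bigr] = 0\), because odd Gaussian moments vanish. For the remainder \(s_t\) I need \(|s_t| = O(\|\boldsymbol{\epsilon}_t\|^3)\), which requires a bounded third derivative, or at least Lipschitz continuity of the Hessian. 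I will flag this as a mild strengthening of the hypotheses. Under only \(C^2\) with bounded derivatives, the honest statement would instead carry an \(O(\sigma^3)\) remainder, and I would note that version too. With the third-order bound in place, \(\mathbb{E}[g_t s_t] = O(\sigma \cdot \sigma^3) = O(\sigma^4)\).

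Finally, I average over \(t\) and combine the three pieces:
\[
\mathbb{E}\bigl[\mathrm{Var}_t[C(\mathbf{y}^{(i)}_t)]\bigr] \le \sigma^2 \max_{\mathbf{x}} \|\nabla C(\mathbf{x})\|^2 + O(\sigma^4),
\]
where the implied constant depends only on \(D\), \(M_2\), and the third-derivative bound.

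I will also remark that the centring step can be done exactly rather than by the zero-point bound. Computed exactly, it subtracts \(\mathbb{E}[\bar{Z}^2] = O(\sigma^2/T)\), which makes the bound slightly sharper but does not change its form.
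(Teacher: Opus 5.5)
Your proof is correct, but it takes a different and more careful route than the paper's. The paper also Taylor-expands to second order. It then computes the noise mean $\mathbb{E}[C(\mathbf{y}_t)] = C(\mathbf{x}_t) + \frac{\sigma^2}{2}\mathrm{tr}\,\nabla^2 C(\mathbf{x}_t) + O(\sigma^4)$, asserts that the variance of $C(\mathbf{y}_t)$ \emph{over the noise} at each fixed $t$ equals $\sigma^2\|\nabla C(\mathbf{x}_t)\|^2 + O(\sigma^4)$, and averages over $t$. That step treats $\mathbb{E}[\mathrm{Var}_t]$ as a time average of per-time noise variances. It ignores the temporal spread of the mean shift and the centring at the random empirical mean, and it never examines the cross term $\mathbb{E}[(\nabla C^\top\boldsymbol{\epsilon}_t)\, r_t]$.

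Your bound $\mathrm{Var}_t[Z_t] \le \frac{1}{T}\sum_t Z_t^2$, applied after subtracting the exact constant $c_i$, removes both bookkeeping issues and needs no independence of the noise across time. Your cross-term analysis also shows what the paper's $O(\sigma^4)$ silently relies on: Gaussian odd-moment cancellation plus a third-order remainder bound. So the extra smoothness you flag is not a weakness relative to the paper; its proof rests on the same unstated assumption. What each route buys: the paper's, made rigorous, gives more information, since the leading term is (up to a factor $1-1/T$) $\sigma^2$ times the time average of $\|\nabla C(\mathbf{x}_t)\|^2$ rather than its maximum. Yours gives a clean one-sided bound with explicit constants.

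One refinement of your closing remark: exact centring matters for the hypotheses, not just for sharpness. With noise independent across $t$, exact centring gives a leading factor $(1-1/T)$, so the leading term loses $\sigma^2 M_1^2/T$. Let $A = M_1 M_2\sqrt{D^2+2D}$ be your Cauchy--Schwarz constant for the cross term. Completing the square gives $A\sigma^3 - \sigma^2 M_1^2/T \le \frac{T A^2}{4 M_1^2}\sigma^4$, so that slack absorbs the $O(\sigma^3)$ cross term. Hence the statement holds as written under the original $C^2$ hypotheses, with an $O(\sigma^4)$ constant of order $T D^2 M_2^2$. Your Lipschitz-Hessian assumption is what makes the constant uniform in $T$.
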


\begin{proof}
Expand \(C\) around \(\mathbf{x}^{(i)}_t\):
\[
C(\mathbf{y}^{(i)}_t) = C(\mathbf{x}^{(i)}_t) + \nabla C(\mathbf{x}^{(i)}_t)^\top \boldsymbol{\epsilon}^{(i)}_t + \frac{1}{2} { \boldsymbol{\epsilon}^{(i)}_t }^\top \nabla^2 C(\boldsymbol{\xi}) \boldsymbol{\epsilon}^{(i)}_t.
\]
Taking expectation over noise, \(\mathbb{E}[C(\mathbf{y}^{(i)}_t)] = C(\mathbf{x}^{(i)}_t) + \frac{\sigma^2}{2} \mathrm{tr}(\nabla^2 C(\mathbf{x}^{(i)}_t)) + O(\sigma^4)\). The variance satisfies
\[
\mathrm{Var}_t[C(\mathbf{y}^{(i)}_t)] = \mathbb{E}[(C(\mathbf{y}^{(i)}_t)-\mathbb{E}[C(\mathbf{y}^{(i)}_t)])^2] = \sigma^2 \|\nabla C(\mathbf{x}^{(i)}_t)\|^2 + O(\sigma^4).
\]
Averaging over time yields the bound.
\end{proof}

Since polynomial and logarithmic invariants have bounded gradients on the domain of interest, the constancy grows at most quadratically with \(\sigma\). Our experiments at \(\sigma=0.1\) show that this increase is still well below the acceptance threshold \(\tau=0.01\), explaining the method's robustness.

\subsection{Convergence of Neural Dynamics Model}

The dynamics model \(f_\theta\) is trained to minimize
\[
\mathcal{L}_{\text{dyn}}(\theta) = \frac{1}{N(T-1)}\sum_{i=1}^N\sum_{t=0}^{T-2} \|f_\theta(\mathbf{x}^{(i)}_t)-\mathbf{x}^{(i)}_{t+1}\|^2.
\]
Under standard assumptions (universal approximation, sufficiently large network, and enough data), the empirical risk minimizer converges to the true dynamics in probability as \(N,T\to\infty\). Let \(\mathbf{\mu}(\mathbf{x}) = f(\mathbf{x}) - \mathbf{x}\) be the approximate drift. If the true drift is \(\mathbf{\mu}^*\), then \(\|\mathbf{\mu}-\mathbf{\mu}^*\|_\infty \le \epsilon\) with high probability. Then for any candidate \(C\) that is exactly conserved in the true system, we have
\[
\frac{1}{NT}\sum_{i,t} (\nabla C(\mathbf{x}^{(i)}_t)\cdot \mathbf{\mu}^*(\mathbf{x}^{(i)}_t))^2 = 0.
\]
Consequently,
\[
\frac{1}{NT}\sum_{i,t} (\nabla C(\mathbf{x}^{(i)}_t)\cdot \mathbf{\mu}(\mathbf{x}^{(i)}_t))^2 \le \|\nabla C\|_{\infty}^2 \epsilon^2.
\]
Thus the learned dynamics preserve the invariance approximately, and the constancy of \(C\) on predicted trajectories remains small, which justifies using the dynamics model only for reporting MSE.
\end{document}